\documentclass{article}

\usepackage{arxiv}

\usepackage[utf8]{inputenc}
\usepackage[T1]{fontenc}
\usepackage{amsmath, amssymb, amsthm, amsfonts}
\usepackage{newtxtext}
\usepackage{newtxmath}
\usepackage{booktabs}
\usepackage{multirow}
\usepackage{colortbl}
\usepackage{mathtools}
\usepackage{bm}
\usepackage{graphicx}
\usepackage{nicefrac}
\usepackage{xcolor}
\usepackage{tikz}
\usetikzlibrary{arrows.meta, positioning, calc}
\usepackage{placeins}
\usepackage{microtype}
\usepackage{hyperref}
\usepackage{url}
\usepackage{natbib}

\providecommand{\keywords}[1]{%
  \begin{center}
    \small\textbf{Keywords:} #1
  \end{center}
}

\newcommand{\R}{\mathbb{R}}
\newcommand{\E}{\mathbb{E}}
\newcommand{\KL}{\mathrm{KL}}
\newcommand{\StudentT}{\mathrm{StudentT}}
\newcommand{\Normal}{\mathcal{N}}

\newcommand{\sigm}{\mathrm{sigmoid}}

\newcommand{\DeRegime}{\textsc{DeRegiME}}
\newcommand{\DeRegimeFull}{Deep Regime Mixture of Experts}
\newcommand{\PatchTST}{\textsc{PatchTST}}
\newcommand{\RevIN}{\textsc{RevIN}}

\newcommand{\second}[1]{\underline{#1}}

\newtheorem{theorem}{Theorem}
\newtheorem{proposition}[theorem]{Proposition}

\theoremstyle{definition}
\newtheorem{assumption}[theorem]{Assumption}
\theoremstyle{remark}

\title{DeRegiME: Deep Regime Mixtures for\\
Probabilistic Forecasting under Distribution Shift}
\renewcommand{\shorttitle}{DeRegiME: Deep Regime Mixtures for Probabilistic Forecasting under Distribution Shift}

\author{%
  Kieran Wood\textsuperscript{1,2},
  Stefan Zohren\textsuperscript{1},
  Stephen J. Roberts\textsuperscript{1}\\
  \textsuperscript{1}Machine Learning Research Group, University of Oxford\\
  \textsuperscript{2}Oxford-Man Institute, University of Oxford\\
  \texttt{\{kieran.wood,stefan.zohren,stephen.roberts\}@eng.ox.ac.uk}
}

\begin{document}

\maketitle

\begin{abstract}
We introduce \DeRegime{} -- \DeRegimeFull{} -- a direct multi-horizon probabilistic forecaster that separates latent uncertainty regimes from the underlying signal and softly assigns each forecast location to learned recurring regimes using a sparse variational Gaussian process (GP) whose nonstationary regime-mixing kernel and Student-$t$ likelihood combine per-regime sub-kernels and noise processes via a shared gate. This yields a single sparse-GP posterior, not a mixture of GP experts.
\DeRegime{} addresses a key limitation of neural forecasters: point forecasts discard residual uncertainty, and probabilistic heads -- whether single marginals, uninterpreted mixtures, quantile sets, or diffusion samples -- rarely expose the regime structure of the residual. Yet distribution shift in noisy heteroskedastic time series may be abrupt, gradual, or horizon-dependent and often appears in residual uncertainty rather than the conditional mean.
\DeRegime{} yields an interpretable mean--residual--noise decomposition with a direct-sum feature-space representation that anchors regimes as clusters of residual similarity whose transitions surface as implicit changepoints. The effective number of regimes is pruned by the stick-breaking gate.
We prove kernel validity and predictive-density propriety, and across ten benchmarks and three encoder grids \DeRegime{} improves negative log predictive density (NLPD) by $20.3\%$ over the strongest encoder-matched baseline, a DeepAR/GluonTS-style dynamic Student-$t$ head, with parallel gains on CRPS ($3.0\%$) and MSE ($4.7\%$). Improvements are consistent across all datasets, which span abrupt, gradual, and seasonal shifts.
Code is available at \url{https://github.com/kieranjwood/deregime}.
\end{abstract}

\keywords{probabilistic time-series forecasting, distribution shift, regime-switching, mixture of experts, multi-horizon forecasting, deep kernel learning, sparse Gaussian processes, heteroskedasticity}

\begin{figure}[!h]
\centering
\includegraphics[width=0.95\linewidth]{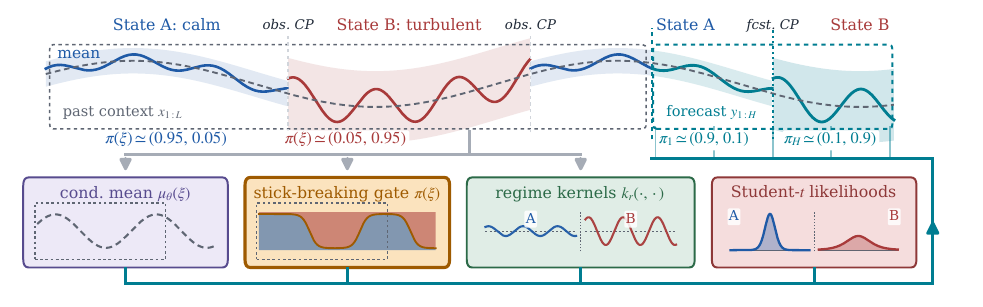}
\vspace{-3mm}
\caption{\DeRegime{} pipeline, past-context routing (grey) and predictive to the forecast (teal).}
\label{fig:concept-regime-reuse}
\vspace{-2mm}
\end{figure}

\section{Introduction}
\label{sec:intro}

Probabilistic time-series forecasting is challenging because the uncertainty state can change even when the conditional mean does not. Two past windows can imply similar expected futures but different residual behaviour: covariance, scale, or tail thickness may change abruptly at changepoints, gradually through drift, or differently across forecast horizons \citep{hamilton1989new,adams2007bocpd,garnett2010sequential}. Gaussian processes (GPs) provide principled uncertainty quantification, but standard regression typically assumes a single stationary kernel and one Gaussian noise model \citep{rasmussen2006gpml}. Neural probabilistic forecasting frameworks such as DeepAR and GluonTS attach parametric likelihood heads to encoder states \citep{salinas2020deepar,alexandrov2020gluonts}, while quantile-based forecasters such as Temporal Fusion Transformer (TFT) estimate predictive quantiles directly \citep{lim2021tft}. These heads can compress heterogeneous uncertainty states into a single marginal summary, blurring whether forecast risk arises from mean error, structured residual variation, or a high-noise regime. We frame \DeRegime{} as an output-side probabilistic mechanism: the deep forecasting architecture is held fixed across Gaussian, Student-$t$, mixture-density, quantile, single-kernel GP, and our regime-mixing heads.

\DeRegime{} (\DeRegimeFull{}) is built around the complementary decomposition
\begin{equation}
\tilde y_\xi
=
\underbrace{\mu_\theta(\xi)}_{\text{trend, drift, oscillation}}
+
\underbrace{\delta_\xi}_{\text{structured residual signal}}
+
\underbrace{\epsilon_{r_\xi,\xi}}_{\text{scale, tail, noise state}},
\qquad
\Pr(r_\xi=r\mid \xi)=\pi_r(\xi),
\label{eq:decomposition-main}
\end{equation}
where $\xi=(x,t,d)$ indexes the conditioning window, forecast horizon, and channel. The variable $r_\xi$ is an unobserved regime label for this particular forecast location, and conditional on the encoded past and horizon the neural gate assigns probabilities $\pi_r(\xi)=\Pr(r_\xi=r\mid \xi)$ over the $R_{\max}$ candidate regimes. The residual term $\epsilon_{r_\xi,\xi}$ is regime-dependent rather than the usual additive Gaussian noise. Conditional on the latent state $r$ and the GP residual $\delta_\xi$, $\epsilon_{r_\xi,\xi}$ has the regime-$r$ Student-$t$ density with scale $\sigma_r(\xi)$ and tail $\nu_r$. This notation is a generative shorthand. At prediction and training time \DeRegime{} marginalises over $r_\xi$ through the mixture likelihood rather than making a hard regime assignment. Figure~\ref{fig:concept-regime-reuse} sketches this decomposition and its recurring regime allocations. The two central objects are the gate-weighted residual covariance
\begin{equation}
K_{\mathrm{mix}}(\xi,\xi')
=
\sum_{r=1}^{R_{\max}}\pi_r(\xi)\pi_r(\xi')K_r(z_r(\xi),z_r(\xi')),
\label{eq:kmix-main}
\end{equation}
where $z_r(\xi)$ are regime-specific deep-kernel features \citep{wilson2016deep} produced by per-regime expert heads on the encoded past. Under this kernel, sparse variational inference gives a scalar marginal residual posterior at each forecast location,
\[
\delta\sim\mathcal{GP}(m_0,K_{\mathrm{mix}}),
\qquad
q(\delta_\xi\mid x)=\Normal\!\left(\delta_\xi;m_\delta(\xi),s_\delta^2(\xi)\right),
\]
where $\mathcal{GP}(m_0,K_{\mathrm{mix}})$ denotes a Gaussian-process prior over the residual function $\xi\mapsto\delta_\xi$, and $m_\delta(\xi)$ and $s_\delta^2(\xi)$ are the sparse-GP posterior mean and variance of the residual at $\xi$; Appendix~\ref{app:svgp-predictive} gives the closed forms. The predictive density marginalises both the residual $\delta_\xi$ and the unobserved regime label:
\begin{equation}
p(\tilde y_\xi\mid \xi)
=
\int q(\delta_\xi\mid x)
\sum_{r=1}^{R_{\max}}\pi_r(\xi)\,
\StudentT\!\left(\tilde y_\xi;\mu_\theta(\xi)+\delta_\xi,\sigma_r(\xi),\nu_r\right)
\,d\delta_\xi .
\label{eq:predictive-main}
\end{equation}
Here $m_0$ is the small gate-weighted residual offset, the variational posterior $q$ follows the sparse GP approximation \citep{titsias2009variational,hensman2013gaussian}, and $\sigma_r(\xi)$ is defined in Eq.~\eqref{eq:scale-main}. The gate is therefore not merely an output-mixture weight. It decides which forecast locations share residual GP structure and which scale/tail component scores the observation. The mean path captures drift, trend, and oscillatory structure, while regimes represent \emph{uncertainty states} such as calm/turbulent, light/heavy-tailed, or transient/persistent, rather than separate mean functions, with only a learned gate-weighted constant offset inside the residual GP. Since the regime gate depends jointly on the conditioning window and forecast horizon, a transient shock regime may dominate early horizons while a slower uncertainty regime persists further out. This is the key multi-horizon distinction. \DeRegime{} does not assign one latent state to an entire forecast path but learns horizon-indexed regime allocations.

The setting remains multivariate, but predictive scoring factorises into scalar marginals per horizon-channel location, matching channel-independent strong baselines like DLinear and \PatchTST{} \citep{zeng2023dlinear,nie2023patchtst}. Cross-channel sharing enters through shared encoder weights, regime parameters, and channel calibration $c_d$, so the sparse GP residual is scalar-valued and the evidence lower bound (ELBO) requires only one-dimensional Gauss--Hermite quadrature per location.

Because the same simplex weights control GP covariance, likelihood mixing, and cluster assignment, regimes are interpretable without an auxiliary clustering objective or post-hoc fitting step \citep{bishop1994mixture,jacobs1991adaptive,jordan1994hierarchical}. Gate vectors form soft clusters, and per-regime scale/tail parameters $(\tau_r,\nu_r)$ paired with channel calibration $c_d$ define a transferable set of uncertainty states (calm/volatile, light/heavy-tailed, short/long-horizon) shared across channels.

This decomposition is motivated by Bayesian changepoint modelling. A fully Bayesian approach would compare models $M_K$ with $K=0,1,2,\ldots$ changepoints or regimes through marginal likelihoods,
\begin{equation}
p(M_K\mid \mathcal{D}) \propto p(\mathcal{D}\mid M_K)p(M_K),
\label{eq:bayes-factor-motivation}
\end{equation}
but the discrete search is combinatorial and the marginal likelihoods are intractable for non-conjugate deep-kernel likelihoods. \DeRegime{} replaces this with a single differentiable class via finite regime truncation, continuous gates, and held-out predictive scores. Sharp changepoints, gradual drift, and recurrent regimes are limiting behaviours of the same gate, contrasting with run-length \citep{adams2007bocpd,garnett2010sequential} and Markov-switching \citep{hamilton1989new} formulations that condition on the current segment or require an explicit temporal state model \citep{wood2022slow}. Appendix~\ref{app:changepoint-details} details the fuller changepoint/Bayes-factor view.

The model has three coupled pieces: a mean backbone, a regime gate, and regime-specific GP features. In the reported configuration the neural gate outputs logits $\gamma_{1:R_{\max}-1}(\xi)$; their sigmoid transforms $v_r(\xi)=\sigm(\gamma_r(\xi))$ are break fractions, which are mapped to simplex weights through a finite deterministic stick-breaking map \citep{sethuraman1994constructive,ishwaran2001gibbs}:
\begin{equation}
\pi_r(\xi)=v_r(\xi)\prod_{j<r}\!\left[1-v_j(\xi)\right],
\quad r<R_{\max},
\qquad
\pi_{R_{\max}}(\xi)=\prod_{j<R_{\max}}\!\left[1-v_j(\xi)\right].
\label{eq:stick-main}
\end{equation}
Although $R_{\max}$ is fixed for computation, realised gate mass determines how many regimes are used. We summarise this through the effective regime count $R_{\mathrm{eff}}$, the number with average gate mass above $10^{-2}$ (Appendix~\ref{app:gate-details}). The same gate controls the covariance in Eq.~\eqref{eq:kmix-main} and the predictive density in Eq.~\eqref{eq:predictive-main}, distinguishing \DeRegime{} from same-backbone mixture-density heads (which mix only output distributions) and from independent mixture-of-GP experts \citep{tresp2001mixtures,rasmussen2002infinite} (which combine separately fitted local predictors).
Our contributions are:
\begin{itemize}
\item We formulate direct multi-horizon forecasting as a decomposition into a shared conditional mean, structured GP residual signal, and regime-dependent residual uncertainty, making regimes uncertainty states rather than mean clusters.
\item We propose the regime-mixing kernel for deep multi-horizon forecasting: a gate-weighted nonstationary kernel mixture that enters the GP covariance directly rather than only at the output density.
\item We establish a direct-sum feature-space representation (Proposition~\ref{prop:feature-space-main}) that anchors regimes as clusters of residual similarity, and show that finite deterministic stick-breaking over a fixed truncation $R_{\max}$ effectively prunes unused candidate regimes.
\item We couple the kernel with a horizon-indexed Student-$t$ mixture likelihood whose regimes specialise in residual scale, tail thickness, and signal-to-noise profile across forecast timescales.
\item We demonstrate on real-world benchmarks that \DeRegime{} exhibits the targeted regime-switching behaviour -- soft allocation responding to abrupt changepoints, gradual distributional drift, and horizon-dependent residual structure -- alongside aggregate gains over the strongest encoder-matched baselines.
\end{itemize}

\section{Related work}
\label{sec:related}

\paragraph{Probabilistic forecasting.}
Neural probabilistic forecasters typically pair a sequence encoder with a parametric output distribution. DeepAR \citep{salinas2020deepar} and GluonTS \citep{alexandrov2020gluonts} use Gaussian, Student-$t$, or negative-binomial likelihood heads, TFT \citep{lim2021tft} uses attention and quantile losses, and \PatchTST{} \citep{nie2023patchtst} provides a strong patch-based direct-multi-horizon backbone. Neural Processes \citep{garnelo2018conditional,garnelo2018neural} target context-set conditioning rather than long-horizon forecasting heads. \DeRegime{} is orthogonal to encoder choice: it changes the residual distribution from a single horizon-wise component to a gate-coupled regime mixture.

Mixture-density networks (MDNs) parameterise finite output mixtures \citep{bishop1994mixture}, with Student-$t$ mixtures the standard robust alternative \citep{lange1989robust,peel2000robust,fruhwirth2010bayesian}. Their mixture weights, however, do not define a GP covariance or reusable residual-similarity structure. In \DeRegime{}, the gate weights the Student-$t$ components and the kernel jointly, so mixture membership controls both density shape and prior correlation.

\paragraph{Changepoints and regime switching.}
Bayesian online changepoint detection \citep{adams2007bocpd} maintains a run-length posterior. Garnett et al.\ extend this to GP covariance functions \citep{garnett2010sequential}, and Markov-switching models \citep{hamilton1989new} provide a classical finite-state alternative. \DeRegime{} keeps the multi-regime predictive principle but replaces changepoint enumeration and latent Markov-state inference with a differentiable input-/horizon-indexed gate, so abrupt switching and gradual drift become two behaviours of the same soft allocation.

\paragraph{Mixtures of Gaussian processes and experts.}
Mixture-of-experts (MoE) models combine local predictors via input-dependent gates \citep{jacobs1991adaptive,jordan1994hierarchical,shazeer2017outrageously,fedus2022switch}, and mixtures of GPs and infinite GP experts use this for nonstationarity \citep{tresp2001mixtures,rasmussen2002infinite,volodina2020diagnostics}. \DeRegime{} differs by tying the gate to both the sparse-GP covariance and the Student-$t$ likelihood, yielding a single sparse GP posterior on a direct-sum regime feature space rather than a collection of separately normalised local predictors, and by using a finite stick-breaking truncation instead of a nonparametric posterior.

\paragraph{Sparse variational GPs and deep kernels.}
Sparse variational Gaussian processes (SVGPs) introduce inducing variables and optimise an evidence lower bound, making GP inference scalable \citep{titsias2009variational,hensman2013gaussian}. Deep kernel learning (DKL) composes kernels with learned neural features \citep{wilson2016deep}. \DeRegime{} uses both ideas, but replaces the single learned kernel with a positive semi-definite (PSD) sum over gate-weighted regime kernels and replaces the usual Gaussian observation model with a Student-$t$ regime mixture.

\section{Method}
\label{sec:method}

\paragraph{Setup.}
Given a past sequence $x\in\R^{L\times D}$, the task is to forecast a target path $y_{1:H}\in\R^{H\times D}$. \DeRegime{} is a direct multi-horizon model that emits all horizon-indexed means, gates, and GP features in one forward pass rather than recursively feeding predictions back into the input. We write $\xi=(x,t,d)$ for a forecast location consisting of the conditioning window, horizon index, and output channel when needed. All neural baselines and \DeRegime{} use reversible instance normalisation (\RevIN{}) so that the model operates in a locally normalised coordinate system and predictions are transformed back to the original scale \citep{kim2022revin}. When evaluating an original-scale density, the inverse \RevIN{} transform contributes the usual per-channel change-of-variables Jacobian, with details in Appendix~\ref{app:revin-density}.
The likelihood used in the main benchmark is a product of scalar marginal predictive densities over forecast locations $\xi$, so it should be read as a multivariate forecaster with channel-independent marginal scoring rather than a full multivariate output-density model.

\paragraph{Mean/residual/regime decomposition.}
A deep backbone (\PatchTST{}-style patch encoder in the experiments, shared across neural baselines) encodes the past into $h_\psi(x)$, and lightweight heads (linear projections over the flattened encoder output sequence in the reported runs) define
\[
\mu_\theta(\xi)=m_\theta(h_\psi(x),t,d),\qquad
\gamma_\omega(\xi)=g_\omega(h_\psi(x),t,d),\qquad
z_r(\xi)=e_{\eta,r}(h_\psi(x),t,d),
\]
where $m_\theta$ is the shared deterministic location head, $g_\omega$ produces gate logits, and $e_{\eta,r}$ are regime-specific expert heads for GP feature coordinates. The gate logits are mapped to simplex weights via Eq.~\eqref{eq:stick-main}, with implementation details and the simplex regulariser in Appendix~\ref{app:gate-details}. The asymmetry is intentional. \DeRegime{} is not an ensemble of regime-specific mean forecasts. The GP residual also includes a small gate-weighted constant prior mean $m_0(\xi)=\sum_r\pi_r(\xi)b_r$ with learned scalar offsets $b_r$ that captures regime-aligned level jumps without giving each regime a horizon-dependent decoder.

\paragraph{Finite gate and effective regimes.}
The truncation $R_{\max}$ is fixed, but active regime usage is learned through the gate mass. Because the gate is horizon-indexed, $\pi_r(x,t,d)$ can differ across forecast steps -- e.g.\ a shock regime at short horizons and a background-volatility regime at longer ones. The effective regime count $R_{\mathrm{eff}}$ is a diagnostic outcome of training. The ``stick-breaking prior'' refers to the finite deterministic stick-breaking map plus a weak Dirichlet-style penalty, not a Dirichlet-process posterior (Appendix~\ref{app:gate-details}).

\paragraph{Regime-mixing kernel.}
For each regime $r$, let $K_r$ be a PSD base kernel on the regime feature space. The mixture kernel of Eq.~\eqref{eq:kmix-main} contributes from regime $r$ only when both locations have large gate weight for $r$, so locations are similar when allocated to the same regime and close in its feature space.  This is the kernel-space analogue of mixture-of-experts clustering, and because the same $\pi_r(\xi)$ also gates the predictive density in Eq.~\eqref{eq:predictive-main}, kernel-level similarity and likelihood-level mixing are governed by one allocator rather than two parallel weight systems. As a special case, $R_{\max}=2$ with $\pi_1(\xi)=\sigm(-\beta(t-\tau))$ and $\pi_2=1-\pi_1$ recovers the smooth changepoint kernel of \citet{adams2007bocpd,garnett2010sequential}, with $\beta\to\infty$ giving a hard changepoint. \DeRegime{} therefore generalises the standard changepoint kernel to learned input and horizon-dependent gates.

In the reported \DeRegime{} runs, each $K_r$ is a squared-exponential (RBF) kernel on a learned four-dimensional regime feature, with per-regime amplitude $a_r$ and lengthscale $\ell_r$,
\begin{equation}
K_r(z,z')=a_r^2\exp\!\left(-\frac{\|z-z'\|_2^2}{2\ell_r^2}\right),
\qquad z,z'\in\R^4 .
\label{eq:rbf-base-main}
\end{equation}
The neural expert head learns the feature representation while the RBF kernel imposes a smooth prior within each regime \citep{rasmussen2006gpml}. Non-stationarity arises from the regime-specific deep features $z_r(\xi)$ \citep{wilson2016deep} and the gate-weighted mixture in Eq.~\eqref{eq:kmix-main}. Per-regime $a_r,\ell_r$ let regimes differ in signal variance and smoothness. We use an isotropic lengthscale because the feature map can rescale or rotate the regime feature and the resulting kernel is more stable in the sparse-variational setting.

\begin{proposition}[Direct-sum feature-space representation]
\label{prop:feature-space-main}
Each positive semi-definite (PSD) base kernel $K_r$ corresponds to a feature map $\phi_r:\mathcal{Z}_r\to\mathcal{H}_r$ with $K_r(z,z')=\langle\phi_r(z),\phi_r(z')\rangle_{\mathcal{H}_r}$. The map
\[
\Phi(\xi)=\bigoplus_{r=1}^{R_{\max}}\pi_r(\xi)\phi_r(z_r(\xi))
\]
takes values in $\mathcal{H}_1\oplus\cdots\oplus\mathcal{H}_{R_{\max}}$ and satisfies $K_{\mathrm{mix}}(\xi,\xi')=\langle\Phi(\xi),\Phi(\xi')\rangle$.
\end{proposition}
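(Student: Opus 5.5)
The plan is to build the feature map for each base kernel, assemble the direct-sum Hilbert space, and then compute the inner product componentwise.

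First, fix for each $r$ a feature map for $K_r$. The canonical choice is the reproducing kernel Hilbert space (RKHS) $\mathcal{H}_r$ of $K_r$ on $\mathcal{Z}_r$, with $\phi_r(z)=K_r(z,\cdot)$. Moore--Aronszajn guarantees that this RKHS exists whenever $K_r$ is symmetric PSD. The reproducing property then gives $\langle\phi_r(z),\phi_r(z')\rangle_{\mathcal{H}_r}=K_r(z,z')$. For the RBF base kernel of Eq.~\eqref{eq:rbf-base-main}, any other explicit feature map (for example a Bochner/Fourier one into $L^2$) would also work, but the RKHS construction needs only PSD-ness, which is all the statement assumes.

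Second, define the Hilbert direct sum $\mathcal{H}=\mathcal{H}_1\oplus\cdots\oplus\mathcal{H}_{R_{\max}}$. It is the set of tuples $(h_1,\dots,h_{R_{\max}})$ with inner product $\langle (h_r)_r,(g_r)_r\rangle=\sum_r\langle h_r,g_r\rangle_{\mathcal{H}_r}$. Because the sum is finite, this is again a Hilbert space, and no summability or completeness issue arises.

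Third, check that $\Phi$ is well defined. Each $\pi_r(\xi)$ is a finite real scalar in $[0,1]$ by Eq.~\eqref{eq:stick-main}, and $z_r(\xi)\in\mathcal{Z}_r$ by construction of the expert heads. Hence $\Phi(\xi)=(\pi_r(\xi)\phi_r(z_r(\xi)))_{r}$ lies in $\mathcal{H}$.

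Finally, compute directly using bilinearity and the realness of the gate weights:
\[
\langle\Phi(\xi),\Phi(\xi')\rangle=\sum_{r}\pi_r(\xi)\pi_r(\xi')\langle\phi_r(z_r(\xi)),\phi_r(z_r(\xi'))\rangle_{\mathcal{H}_r}=\sum_r\pi_r(\xi)\pi_r(\xi')K_r(z_r(\xi),z_r(\xi')),
\]
which is Eq.~\eqref{eq:kmix-main}.

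There is no real obstacle here. The only point needing care is the existence of each $\phi_r$. I would handle it by invoking Moore--Aronszajn rather than relying on finite-dimensional features, since the RBF kernel has no finite-dimensional feature map.

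As a corollary worth noting, $K_{\mathrm{mix}}$ is PSD. For any finite set of locations $\xi_i$ and real coefficients $c_i$,
\[
\sum_{i,j}c_ic_jK_{\mathrm{mix}}(\xi_i,\xi_j)=\Bigl\|\sum_i c_i\Phi(\xi_i)\Bigr\|^2\ge 0,
\]
which gives kernel validity.
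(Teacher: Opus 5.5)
Your proposal is correct and matches the argument the paper relies on. The paper states this proposition without a written proof, and it uses the direct-sum inner product exactly as your corollary does to obtain positive semi-definiteness of $K_{\mathrm{mix}}$ in Theorem~\ref{thm:kernel-psd-main}. Your computation needs only that the gates are real, not $\pi_r(\xi)\in[0,1]$, which is why that theorem holds for arbitrary real-valued gate functions.
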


\begin{theorem}[Positive semi-definiteness]
\label{thm:kernel-psd-main}
If each $K_r$ is positive semi-definite and $\pi_r(\xi)\in\R$ is any real-valued gate function, then
\[
K_{\mathrm{mix}}(\xi,\xi')=\sum_{r=1}^{R_{\max}}\pi_r(\xi)\pi_r(\xi')K_r(z_r(\xi),z_r(\xi'))
\]
is positive semi-definite.
\end{theorem}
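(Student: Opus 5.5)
The plan is to verify the Gram-matrix definition directly: pick any finite set of locations $\xi_1,\dots,\xi_n$ and coefficients $c\in\R^n$, and show $\sum_{i,j} c_i c_j K_{\mathrm{mix}}(\xi_i,\xi_j)\ge 0$. Symmetry is immediate, since each $K_r$ is symmetric and the prefactor $\pi_r(\xi)\pi_r(\xi')$ is symmetric in $(\xi,\xi')$. So it remains to check nonnegativity of the quadratic form.

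The central step is to exchange the finite sums and absorb the gate values into the coefficients. For each regime $r$, set $c^{(r)}_i := c_i\,\pi_r(\xi_i)\in\R$ and $w^{(r)}_i := z_r(\xi_i)$. Then
\[
\sum_{i,j} c_i c_j K_{\mathrm{mix}}(\xi_i,\xi_j)=\sum_{r=1}^{R_{\max}}\;\sum_{i,j} c^{(r)}_i c^{(r)}_j\,K_r\!\left(w^{(r)}_i,w^{(r)}_j\right).
\]
Each inner double sum is a quadratic form of $K_r$ evaluated at the finite (possibly repeated) point set $\{w^{(r)}_i\}$. It is therefore nonnegative because $K_r$ is PSD. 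Repeated points cause no difficulty, because PSD-ness of a kernel already covers arbitrary finite lists of inputs. A finite sum of nonnegative terms is nonnegative, which concludes the argument.

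The only delicate point is the role of the real-valued $\pi_r$. Nothing in the argument uses $\pi_r\ge 0$ or the simplex constraint: a real scalar multiplier can always be folded into the coefficient vector. The result therefore holds for arbitrary real gates, exactly as the statement claims.

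As an alternative presentation, one could appeal to Proposition~\ref{prop:feature-space-main}. There $K_{\mathrm{mix}}(\xi,\xi')=\langle\Phi(\xi),\Phi(\xi')\rangle$, so the quadratic form equals $\|\sum_i c_i\Phi(\xi_i)\|^2\ge 0$. I would mention this as a one-line corollary. I would keep the direct computation as the main proof because it avoids any construction of the feature Hilbert spaces.
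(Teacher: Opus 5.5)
Your proof is correct and is essentially the paper's appendix matrix proof: your rescaled coefficients $c^{(r)}_i=c_i\pi_r(\xi_i)$ are exactly $G_r a$ with $G_r=\mathrm{diag}(\pi_r(\xi_1),\dots,\pi_r(\xi_N))$, which gives $a^\top K_{\mathrm{mix}}a=\sum_r (G_ra)^\top K_r(G_ra)\ge 0$. Your feature-space alternative is the short argument the paper gives in the main text via Proposition~\ref{prop:feature-space-main}.
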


\noindent\emph{Proof.}
By Proposition~\ref{prop:feature-space-main}, $K_{\mathrm{mix}}$ is an inner product in a direct-sum Hilbert space. Every finite Gram matrix of an inner product is positive semi-definite. A matrix-form proof is in Appendix~\ref{app:proofs}.

\noindent Three consequences follow Proposition~\ref{prop:feature-space-main}: locations with disjoint-regime gates are decorrelated under the GP prior; cluster diagnostics faithfully describe kernel geometry; and the sparse-variational GP operates on a $\sim d_g R_{\max}$-dimensional direct-sum space, setting the scale for choosing $M$.

\paragraph{Student-$t$ mixture likelihood.}
The regime-mixing kernel and gate-weighted constant mean induce the marginal sparse-GP residual posterior $q(\delta_\xi\mid x)=\Normal(m_\delta(\xi),s_\delta^2(\xi))$. The Student-$t$ mixture likelihood in Eq.~\eqref{eq:predictive-main} has per-regime tail parameter $\nu_r$ and scale
\begin{equation}
\sigma_r^2(x,t,d)=(c_d\tau_r)^2+v_{\mathrm{res}}(x,t,d)+\sigma_{\mathrm{floor}}^2,
\label{eq:scale-main}
\end{equation}
where $c_d$ is a channel calibration, $\tau_r$ is a regime scale multiplier, and $v_{\mathrm{res}}$ is a residual variance correction removed in an ablation. The ratio $\rho_r(\xi)=s_\delta^2(\xi)/\sigma_r^2(\xi)$ diagnoses whether uncertainty is dominated by structured latent residual variation or observation noise. Low $\rho_r$ means observation noise dominates the predictive variance, while high $\rho_r$ means latent residual variation dominates, giving each regime an interpretable signal-to-noise profile alongside $\tau_r,\nu_r$.

The residual term $v_{\mathrm{res}}(\xi)$, shared across regimes and predicted from regime-aware features, provides an input and horizon-dependent variance floor without removing regime structure (regimes still differ in $\tau_r$, $\nu_r$, and gate weights). The principal model is the Student-$t$ mixture in Eq.~\eqref{eq:predictive-main}. Gaussian with regime-weighted variance and Gaussian-mixture variants used as ablations are detailed in Appendix~\ref{app:likelihood-variants}.

\paragraph{ELBO and training.}
Let $u=\delta(Z)$ be inducing variables for the regime-mixing GP residual, with variational posterior $q(u)=\Normal(m,S)$, and write $p_r(\tilde y\mid\delta,\xi)=\StudentT(\tilde y;\mu_\theta(\xi)+\delta,\sigma_r(\xi),\nu_r)$ for the regime-$r$ component of the predictive density in Eq.~\eqref{eq:predictive-main}. The ELBO is
\begin{equation}
\mathcal{L}_{\mathrm{ELBO}}
=
\sum_{n=1}^{N}
\E_{q(\delta_n\mid x_n)}
\left[
\log \sum_{r=1}^{R_{\max}}\pi_r(\xi_n)
p_r(\tilde y_n\mid \delta_n,\xi_n)
\right]
-
\KL(q(u)\,\|\,p(u)).
\label{eq:elbo-main}
\end{equation}
The expectation under $q(\delta_n\mid x_n)$ is evaluated by Gauss--Hermite quadrature, and the second term is the closed-form Kullback--Leibler (KL) between Gaussian $q(u)$ and prior $p(u)$. Auxiliary gate penalties are small and only guide early regime usage.

\section{Theoretical properties}
\label{sec:theory}

The PSD theorem above guarantees the kernel is valid, and Eq.~\eqref{eq:elbo-main} gives the standard SVGP training objective specialised to our setting. We state only the additional properties needed to justify the predictive density and its numerical evaluation. Proofs and qualifications are in Appendix~\ref{app:proofs}.

\paragraph{Proper density.}
Eq.~\eqref{eq:predictive-main} is a genuine probability density, not a heuristic score. To use it for held-out NLPD, the posterior predictive density must integrate to one in $y$. This follows whenever the gates lie in the probability simplex, the components $p_r(y\mid \delta,\xi)$ are proper densities, and $q(\delta_\xi\mid x)$ is a proper Gaussian. By Tonelli's theorem, the integral over $y$ can be exchanged with the nonnegative mixture and Gaussian marginalisation, giving $\sum_r \pi_r(\xi)\int q(\delta_\xi\mid x)d\delta_\xi\int p_r(y\mid\delta_\xi,\xi)dy=1$. The formal statement and proof are in Appendix~\ref{app:proofs}. This validity is the precondition for using $-\log p$ as the NLPD log score and for using the corresponding conditional log likelihood inside the ELBO in Eq.~\eqref{eq:elbo-main}.

\paragraph{Identifiability and finite sticks.}
As in standard finite mixtures, active Student-$t$ components with distinct scale--tail pairs are identifiable only up to permutation of the regime labels, and Appendix~\ref{app:identifiability} states this qualification formally. The deterministic stick-breaking map sends finite logits to the open simplex, so exact zero-weight pruning is a limiting event rather than a finite-logit guarantee. Automatic regime discovery is therefore operational: unused regimes are those whose average gate mass falls below the diagnostic threshold in $R_{\mathrm{eff}}$, while their nonzero finite-logit mass still explains the asymptotic tail caveat in Proposition~\ref{prop:tail-main}. We therefore interpret regime labels through fitted behaviour and report $R_{\mathrm{eff}}$ from thresholded average gate mass, not from an exact posterior over the number of regimes.

\paragraph{Tail control.}
The Student-$t$ mixture preserves heavy-tail expressivity: a single low-$\nu_r$ regime is sufficient to make the predictive distribution heavy-tailed.

\begin{proposition}[Tail control]
\label{prop:tail-main}
At a fixed forecast location with $\mathcal{A}=\{r:\pi_r>0\}$ and $\nu_{\mathrm{eff}}=\min_{r\in\mathcal{A}}\nu_r$, the Student-$t$ mixture predictive density has polynomial tail rate $p(y\mid x,t,d)=\Theta(|y|^{-(\nu_{\mathrm{eff}}+1)})$ as $|y|\to\infty$, and $\E[|Y|^k\mid x,t,d]<\infty$ iff $k<\nu_{\mathrm{eff}}$ \citep{embrechts1997modelling}, where $\Theta(\cdot)$ denotes Bachmann--Landau asymptotic equivalence (bounded above and below by constant multiples of the right-hand side for $|y|$ sufficiently large).
\end{proposition}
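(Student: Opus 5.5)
The plan is to reduce the tail behaviour of the full predictive density to that of a single Student-$t$ convolved with a Gaussian, and then use the fact that a finite nonnegative mixture inherits the heaviest tail among its active components.

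Fix $\xi=(x,t,d)$. Write $m=\mu_\theta(\xi)+m_\delta(\xi)$, $s^2=s_\delta^2(\xi)$, and $\sigma_r=\sigma_r(\xi)>0$; the floor $\sigma_{\mathrm{floor}}^2$ keeps $\sigma_r$ bounded away from zero. By Tonelli, exactly as in the propriety argument, Eq.~\eqref{eq:predictive-main} becomes
\[
p(y\mid\xi)=\sum_{r\in\mathcal{A}}\pi_r f_r(y),\qquad f_r(y)=\int \Normal(\delta;0,s^2)\,\StudentT(y;m+\delta,\sigma_r,\nu_r)\,d\delta .
\]
Regimes with $\pi_r=0$ drop out. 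This leaves a finite positive combination, and the task is to show $f_r(y)=\Theta(|y|^{-(\nu_r+1)})$ for each $r$.

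\textbf{Single-component bound.} Let $g_r(u)=\StudentT(u;0,\sigma_r,\nu_r)$, which satisfies $g_r(u)\sim C_r|u|^{-(\nu_r+1)}$. Then $f_r(y)=\E[g_r(y-m-\delta)]$ with $\delta\sim\Normal(0,s^2)$. Split the expectation at the event $\{|\delta|\le|y|/2\}$.
\begin{itemize}
\item On this event, $|y-m-\delta|$ lies between $|y|/2-|m|$ and $3|y|/2+|m|$. So for large $|y|$ the integrand is within constant multiples of $|y|^{-(\nu_r+1)}$, and the event has probability tending to one. This gives matching upper and lower bounds of order $|y|^{-(\nu_r+1)}$.
\item On the complement, $g_r\le g_r(0)$ and $\Pr(|\delta|>|y|/2)\le 2e^{-y^2/(8s^2)}$. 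This contribution is $o(|y|^{-(\nu_r+1)})$.
\end{itemize}
If $s=0$, the Gaussian is a point mass and the claim is immediate. Hence $f_r(y)=\Theta(|y|^{-(\nu_r+1)})$; in fact $f_r(y)\sim C_r|y|^{-(\nu_r+1)}$ by dominated convergence.

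\textbf{Mixture.} The upper bound follows by summing: each active term is $O(|y|^{-(\nu_r+1)})\le O(|y|^{-(\nu_{\mathrm{eff}}+1)})$ for $|y|\ge1$. The lower bound comes from a single regime $r^*$ attaining $\nu_{\mathrm{eff}}$: since $\pi_{r^*}>0$, we have $p\ge\pi_{r^*}f_{r^*}$. Together these give $p(y\mid\xi)=\Theta(|y|^{-(\nu_{\mathrm{eff}}+1)})$.

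\textbf{Moments.} Since $p$ is bounded, $\E|Y|^k<\infty$ iff $\int_{|y|>R}|y|^k p(y)\,dy<\infty$. By the two-sided bound, this holds iff $\int^\infty y^{k-\nu_{\mathrm{eff}}-1}dy<\infty$, i.e. iff $k<\nu_{\mathrm{eff}}$. Alternatively, write $Y=m+\delta+\varepsilon_r$ conditional on $r$; the Gaussian $\delta$ has all moments, and Minkowski's inequality shows $\E|Y|^k<\infty$ iff each active $\varepsilon_r$ has a finite $k$-th moment.

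\textbf{Main obstacle.} The only delicate step is the uniform two-sided control of the Gaussian–Student-$t$ convolution. This requires the splitting argument above and strict positivity of $\sigma_r$, guaranteed by the floor. Everything else is bookkeeping over the finite set $\mathcal{A}$.
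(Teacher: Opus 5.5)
Your proposal is correct and follows the same route as the paper's proof in Appendix~\ref{app:tail}. Each Student-$t$ component has a tail of order $|y|^{-(\nu_r+1)}$, Gaussian convolution preserves this rate because the Gaussian tail is super-polynomial, and a finite positive mixture is governed by its slowest-decaying active component. The paper defers the convolution and mixture steps to regular-variation results in \citet{embrechts1997modelling}, whereas you prove them directly via the split at $|\delta|\le|y|/2$ and the lower bound $p\ge\pi_{r^*}f_{r^*}$, and you read the moment criterion off the two-sided bound rather than citing it.
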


\noindent Finite-logit stick-breaking gives $\pi_r>0$ for all $r$, so $\nu_{\mathrm{eff}}=\min_r\nu_r$ in practice. The prefactor scales linearly with $\pi_r$, so a near-pruned regime has negligible mass at moderate $|y|$ yet still controls the formal tail rate. Operationally, pruning lighter-tailed regimes (high $\nu_r$) leaves $\nu_{\mathrm{eff}}$ unchanged, and only the heaviest-tailed regime determines the asymptotic tail rate. Proof and pruned-regime qualifications in Appendix~\ref{app:tail}.

\begin{assumption}[Gauss--Hermite analytic strip]
\label{ass:gh-main}
For each evaluated point, the log-mixture integrand admits a zero-free analytic continuation to a strip around the real axis and satisfies the standard Hermite-weighted growth condition.
\end{assumption}

\noindent Under Assumption~\ref{ass:gh-main}, standard Hermite-quadrature theory \citep{trefethen2008gauss} gives exponential convergence in the number of nodes $Q$. The zero-free condition is stated explicitly because a positive real-line mixture can have complex zeros off the real axis. In experiments we use $Q=20$ nodes and treat the quadrature as a deterministic numerical approximation. Appendix~\ref{app:gh-qualification} states the convergence rate explicitly and bounds the practical quadrature error in our setting.

\begin{table}[htbp]
\caption{Per-model benchmark on the \PatchTST{} grid. Standard-scaled targets, all-horizon means; cell = mean with per-seed std subscript (3 seeds); lower better; bold/underline mark best/second-best per (metric, dataset). Dynamic probabilistic heads: DeepAR/GluonTS-style Gaussian and Student-$t$ \citep{salinas2020deepar,alexandrov2020gluonts}, TFT-style quantile \citep{lim2021tft} (no density); deep-kernel GP heads: single-kernel DKL \citep{wilson2016deep}.}
\label{tab:results-patchtst}
\centering
\small
\setlength{\tabcolsep}{3pt}
\begin{tabular}{clcccccc}
\toprule
& & & \multicolumn{3}{c}{\textit{Dynamic probabilistic heads}} & \multicolumn{2}{c}{\textit{Deep-kernel GP heads}} \\
\cmidrule(lr){4-6}\cmidrule(lr){7-8}
& Dataset & \DeRegime{} & Gaussian & Student-$t$ & Quantile & RBF & RQ \\
\midrule
\multirow{10}{*}{\rotatebox[origin=c]{90}{\textbf{NLPD}}}
& ETTh1 & $\second{0.534_{\,0.006}}$ & $0.590_{\,0.001}$ & $\mathbf{0.521_{\,0.003}}$ & --- & $0.665_{\,0.001}$ & $0.662_{\,0.004}$ \\
& ETTh2 & $\mathbf{0.052_{\,0.009}}$ & $0.109_{\,0.007}$ & $\second{0.069_{\,0.010}}$ & --- & $0.405_{\,0.004}$ & $0.405_{\,0.001}$ \\
& ETTm1 & $\mathbf{0.207_{\,0.012}}$ & $0.326_{\,0.016}$ & $\second{0.209_{\,0.007}}$ & --- & $1.060_{\,0.000}$ & $1.061_{\,0.000}$ \\
& ETTm2 & $\mathbf{-0.285_{\,0.042}}$ & $-0.099_{\,0.013}$ & $\second{-0.221_{\,0.015}}$ & --- & $0.573_{\,0.020}$ & $0.880_{\,0.159}$ \\
& Exchange & $\second{-0.508_{\,0.020}}$ & $-0.464_{\,0.033}$ & $\mathbf{-0.520_{\,0.022}}$ & --- & $0.666_{\,0.000}$ & $0.672_{\,0.001}$ \\
& Electricity & $\mathbf{-0.905_{\,0.139}}$ & $-0.546_{\,0.216}$ & $\second{-0.627_{\,0.192}}$ & --- & $0.238_{\,0.016}$ & $0.253_{\,0.007}$ \\
& Traffic & $\mathbf{-0.546_{\,0.102}}$ & $-0.282_{\,0.150}$ & $\second{-0.480_{\,0.253}}$ & --- & $0.384_{\,0.004}$ & $0.394_{\,0.003}$ \\
& Nasdaq & $\mathbf{0.895_{\,0.004}}$ & $\second{0.951_{\,0.022}}$ & $0.954_{\,0.070}$ & --- & $1.071_{\,0.012}$ & $1.153_{\,0.017}$ \\
& Illness & $\mathbf{1.969_{\,0.171}}$ & $2.847_{\,0.602}$ & $\second{2.150_{\,0.140}}$ & --- & $2.294_{\,0.007}$ & $2.285_{\,0.003}$ \\
& Weather & $\second{-0.661_{\,0.008}}$ & $-0.556_{\,0.003}$ & $\mathbf{-0.665_{\,0.001}}$ & --- & $-0.281_{\,0.005}$ & $-0.267_{\,0.005}$ \\
\midrule
\multirow{10}{*}{\rotatebox[origin=c]{90}{\textbf{CRPS}}}
& ETTh1 & $\second{0.269_{\,0.001}}$ & $0.270_{\,0.001}$ & $\mathbf{0.268_{\,0.001}}$ & $0.279_{\,0.003}$ & $0.281_{\,0.000}$ & $0.280_{\,0.001}$ \\
& ETTh2 & $\mathbf{0.170_{\,0.001}}$ & $0.175_{\,0.002}$ & $\second{0.170_{\,0.000}}$ & $0.180_{\,0.004}$ & $0.242_{\,0.000}$ & $0.242_{\,0.000}$ \\
& ETTm1 & $\second{0.209_{\,0.001}}$ & $0.212_{\,0.001}$ & $\mathbf{0.209_{\,0.001}}$ & $0.210_{\,0.002}$ & $0.465_{\,0.000}$ & $0.465_{\,0.000}$ \\
& ETTm2 & $\mathbf{0.138_{\,0.001}}$ & $0.140_{\,0.002}$ & $\second{0.139_{\,0.000}}$ & $0.139_{\,0.001}$ & $0.260_{\,0.003}$ & $0.365_{\,0.102}$ \\
& Exchange & $0.086_{\,0.001}$ & $\mathbf{0.085_{\,0.001}}$ & $\second{0.086_{\,0.002}}$ & $0.087_{\,0.003}$ & $0.284_{\,0.000}$ & $0.286_{\,0.000}$ \\
& Electricity & $\mathbf{0.060_{\,0.003}}$ & $0.077_{\,0.013}$ & $0.079_{\,0.013}$ & $\second{0.061_{\,0.001}}$ & $0.134_{\,0.004}$ & $0.136_{\,0.002}$ \\
& Traffic & $\second{0.099_{\,0.001}}$ & $0.117_{\,0.024}$ & $0.109_{\,0.026}$ & $\mathbf{0.078_{\,0.002}}$ & $0.162_{\,0.001}$ & $0.162_{\,0.001}$ \\
& Nasdaq & $\mathbf{0.347_{\,0.001}}$ & $\second{0.352_{\,0.002}}$ & $0.361_{\,0.009}$ & $0.409_{\,0.024}$ & $0.395_{\,0.007}$ & $0.432_{\,0.010}$ \\
& Illness & $\second{0.893_{\,0.022}}$ & $0.908_{\,0.026}$ & $0.900_{\,0.023}$ & $\mathbf{0.870_{\,0.043}}$ & $1.311_{\,0.001}$ & $1.311_{\,0.000}$ \\
& Weather & $\second{0.110_{\,0.001}}$ & $0.111_{\,0.001}$ & $\mathbf{0.110_{\,0.000}}$ & $0.111_{\,0.000}$ & $0.125_{\,0.001}$ & $0.126_{\,0.001}$ \\
\midrule
\multirow{10}{*}{\rotatebox[origin=c]{90}{\textbf{MSE}}}
& ETTh1 & $0.322_{\,0.003}$ & $0.323_{\,0.001}$ & $0.319_{\,0.001}$ & $0.337_{\,0.007}$ & $\second{0.318_{\,0.001}}$ & $\mathbf{0.317_{\,0.000}}$ \\
& ETTh2 & $\second{0.118_{\,0.001}}$ & $0.124_{\,0.003}$ & $\mathbf{0.118_{\,0.000}}$ & $0.130_{\,0.003}$ & $0.232_{\,0.000}$ & $0.232_{\,0.000}$ \\
& ETTm1 & $\mathbf{0.217_{\,0.002}}$ & $0.219_{\,0.001}$ & $\second{0.217_{\,0.001}}$ & $0.228_{\,0.003}$ & $0.909_{\,0.001}$ & $0.910_{\,0.000}$ \\
& ETTm2 & $\mathbf{0.090_{\,0.002}}$ & $\second{0.091_{\,0.001}}$ & $0.092_{\,0.000}$ & $0.093_{\,0.001}$ & $0.207_{\,0.009}$ & $0.255_{\,0.087}$ \\
& Exchange & $\mathbf{0.027_{\,0.001}}$ & $\second{0.028_{\,0.001}}$ & $0.028_{\,0.001}$ & $0.028_{\,0.001}$ & $0.268_{\,0.000}$ & $0.271_{\,0.000}$ \\
& Electricity & $\mathbf{0.016_{\,0.001}}$ & $0.023_{\,0.007}$ & $0.025_{\,0.007}$ & $\second{0.016_{\,0.000}}$ & $0.026_{\,0.004}$ & $0.026_{\,0.002}$ \\
& Traffic & $\second{0.060_{\,0.005}}$ & $0.075_{\,0.027}$ & $0.069_{\,0.030}$ & $\mathbf{0.031_{\,0.002}}$ & $0.065_{\,0.003}$ & $0.064_{\,0.003}$ \\
& Nasdaq & $\mathbf{0.424_{\,0.004}}$ & $\second{0.455_{\,0.009}}$ & $0.464_{\,0.020}$ & $0.514_{\,0.038}$ & $0.518_{\,0.018}$ & $0.633_{\,0.042}$ \\
& Illness & $\mathbf{2.966_{\,0.055}}$ & $3.582_{\,0.259}$ & $3.266_{\,0.211}$ & $\second{3.094_{\,0.109}}$ & $5.784_{\,0.001}$ & $5.785_{\,0.001}$ \\
& Weather & $\mathbf{0.083_{\,0.001}}$ & $\second{0.084_{\,0.001}}$ & $0.084_{\,0.000}$ & $0.085_{\,0.000}$ & $0.089_{\,0.001}$ & $0.089_{\,0.001}$ \\
\bottomrule
\end{tabular}
\end{table}

\begin{table}[htbp]
\caption{Relative change (\%) of \DeRegime{} vs.\ the encoder-matched Student-$t$ head (strongest baseline by NLPD). Encoder fixed; negative favours \DeRegime{}; subscripts give delta-method uncertainty propagated from per-seed stds (3 seeds). Patch./DLin./TimeM.\ denote the \PatchTST{}/DLinear/TimeMixer encoder grids; per-grid breakdowns in Tables~\ref{tab:results-patchtst},~\ref{tab:results-dlinear},~\ref{tab:results-timemixer}.}
\label{tab:relative-summary}
\centering
\setlength{\tabcolsep}{2pt}
\renewcommand{\arraystretch}{0.95}
\resizebox{\textwidth}{!}{%
\begin{tabular}{clrrrrrrrrrr!{\color{gray!50}\vrule}r}
\toprule
&  & ETTh1 & ETTh2 & ETTm1 & ETTm2 & Exch. & Elec. & Traffic & Nasdaq & Illness & Weather & Avg. \\
\midrule
\multirow{4}{*}{\rotatebox[origin=c]{90}{\textbf{$\Delta$NLPD}}}
& Patch. & $+2.5_{\,1.3}$ & $-24.6_{\,17.0}$ & $-1.2_{\,6.6}$ & $-28.7_{\,20.9}$ & $+2.3_{\,5.6}$ & $-44.4_{\,49.4}$ & $-13.7_{\,63.6}$ & $-6.2_{\,6.9}$ & $-8.4_{\,9.9}$ & $+0.7_{\,1.2}$ & $\mathbf{-12.2_{\,8.6}}$ \\
& DLin. & $-11.5_{\,2.6}$ & $-11.8_{\,12.5}$ & $-29.4_{\,5.4}$ & $-93.3_{\,27.2}$ & $-2.9_{\,1.2}$ & $-9.5_{\,1.0}$ & $-41.3_{\,4.4}$ & $-2.9_{\,0.7}$ & $+1.9_{\,2.2}$ & $-10.8_{\,1.7}$ & $\mathbf{-21.2_{\,3.1}}$ \\
& TimeM. & $+1.2_{\,1.7}$ & $-47.0_{\,8.7}$ & $-7.3_{\,10.6}$ & $-183.2_{\,115.8}$ & $-3.1_{\,6.1}$ & $+1.9_{\,18.1}$ & $+19.8_{\,16.1}$ & $+5.0_{\,3.7}$ & $-12.0_{\,10.9}$ & $-51.6_{\,13.6}$ & $\mathbf{-27.6_{\,12.1}}$ \\
\arrayrulecolor{gray!50}\cmidrule(lr){2-13}\arrayrulecolor{black}
& \textbf{Mean} & $\mathbf{-2.6_{\,1.1}}$ & $\mathbf{-27.8_{\,7.6}}$ & $\mathbf{-12.7_{\,4.5}}$ & $\mathbf{-101.8_{\,40.3}}$ & $\mathbf{-1.2_{\,2.8}}$ & $\mathbf{-17.4_{\,17.6}}$ & $\mathbf{-11.7_{\,21.9}}$ & $\mathbf{-1.4_{\,2.6}}$ & $\mathbf{-6.1_{\,5.0}}$ & $\mathbf{-20.6_{\,4.6}}$ & $\boxed{\mathbf{-20.3_{\,5.0}}}$ \\
\midrule
\multirow{4}{*}{\rotatebox[origin=c]{90}{\textbf{$\Delta$CRPS}}}
& Patch. & $+0.5_{\,0.5}$ & $-0.1_{\,0.6}$ & $+0.2_{\,0.7}$ & $-0.6_{\,0.7}$ & $+0.0_{\,2.6}$ & $-23.6_{\,13.1}$ & $-8.9_{\,21.7}$ & $-3.9_{\,2.4}$ & $-0.8_{\,3.5}$ & $+0.3_{\,0.9}$ & $\mathbf{-3.7_{\,2.6}}$ \\
& DLin. & $-4.3_{\,0.8}$ & $-0.3_{\,0.0}$ & $-8.3_{\,0.8}$ & $-2.3_{\,1.4}$ & $-0.9_{\,0.0}$ & $-9.2_{\,0.0}$ & $-18.5_{\,1.7}$ & $-1.3_{\,0.0}$ & $+2.3_{\,1.6}$ & $-9.0_{\,0.8}$ & $\mathbf{-5.2_{\,0.3}}$ \\
& TimeM. & $+0.3_{\,0.5}$ & $-0.7_{\,0.0}$ & $-0.8_{\,1.9}$ & $-4.0_{\,0.7}$ & $-1.6_{\,3.4}$ & $-2.1_{\,7.2}$ & $+16.0_{\,1.8}$ & $+1.3_{\,1.8}$ & $+0.5_{\,2.2}$ & $-9.0_{\,1.1}$ & $\mathbf{+0.0_{\,0.9}}$ \\
\arrayrulecolor{gray!50}\cmidrule(lr){2-13}\arrayrulecolor{black}
& \textbf{Mean} & $\mathbf{-1.2_{\,0.4}}$ & $\mathbf{-0.4_{\,0.2}}$ & $\mathbf{-2.9_{\,0.7}}$ & $\mathbf{-2.3_{\,0.6}}$ & $\mathbf{-0.8_{\,1.4}}$ & $\mathbf{-11.6_{\,5.0}}$ & $\mathbf{-3.8_{\,7.3}}$ & $\mathbf{-1.3_{\,1.0}}$ & $\mathbf{+0.7_{\,1.5}}$ & $\mathbf{-5.9_{\,0.5}}$ & $\boxed{\mathbf{-3.0_{\,0.9}}}$ \\
\midrule
\multirow{4}{*}{\rotatebox[origin=c]{90}{\textbf{$\Delta$MSE}}}
& Patch. & $+0.9_{\,1.0}$ & $+0.3_{\,0.8}$ & $-0.1_{\,1.0}$ & $-1.3_{\,2.2}$ & $-1.4_{\,5.0}$ & $-36.2_{\,18.4}$ & $-12.6_{\,38.5}$ & $-8.7_{\,4.0}$ & $-9.2_{\,6.1}$ & $-1.0_{\,1.2}$ & $\mathbf{-6.9_{\,4.4}}$ \\
& DLin. & $-0.7_{\,0.9}$ & $+1.0_{\,0.0}$ & $-6.6_{\,0.5}$ & $-1.6_{\,1.1}$ & $-0.2_{\,0.0}$ & $-23.2_{\,4.8}$ & $-32.1_{\,1.2}$ & $-1.9_{\,1.1}$ & $-0.8_{\,1.0}$ & $-22.0_{\,1.1}$ & $\mathbf{-8.8_{\,0.5}}$ \\
& TimeM. & $+0.0_{\,0.4}$ & $+0.4_{\,0.8}$ & $-1.6_{\,3.2}$ & $-1.9_{\,1.5}$ & $-1.3_{\,3.7}$ & $-9.5_{\,5.2}$ & $+38.0_{\,17.1}$ & $+2.6_{\,3.4}$ & $-1.6_{\,2.1}$ & $-8.4_{\,1.4}$ & $\mathbf{+1.7_{\,1.9}}$ \\
\arrayrulecolor{gray!50}\cmidrule(lr){2-13}\arrayrulecolor{black}
& \textbf{Mean} & $\mathbf{+0.1_{\,0.5}}$ & $\mathbf{+0.6_{\,0.4}}$ & $\mathbf{-2.8_{\,1.1}}$ & $\mathbf{-1.6_{\,0.9}}$ & $\mathbf{-1.0_{\,2.1}}$ & $\mathbf{-23.0_{\,6.6}}$ & $\mathbf{-2.3_{\,14.0}}$ & $\mathbf{-2.7_{\,1.8}}$ & $\mathbf{-3.8_{\,2.2}}$ & $\mathbf{-10.5_{\,0.7}}$ & $\boxed{\mathbf{-4.7_{\,1.6}}}$ \\
\bottomrule
\end{tabular}%
}
\end{table}

\paragraph{Ablations.}
We run PatchTST-only ablations and head-level stress tests on all ten datasets. All entries are measured against the reported \DeRegime{} model with the same \PatchTST{} encoder, and use the arithmetic average of seed-level test metrics.

\begin{table}[t]
\caption{PatchTST ablations and stress tests. Entries are average relative metric change (\%) against the reported \DeRegime{} model on all ten datasets; subscripts give one-sigma delta-method uncertainty propagated from seed-level variability and averaged across datasets. Negative values favour the ablation. All variants use 3 seeds per dataset. Variant descriptions in Appendix~\ref{app:ablation-details}.}
\label{tab:ablations-patchtst}
\centering
\small
\setlength{\tabcolsep}{4pt}
\begin{minipage}[t]{0.48\linewidth}
\centering
\textit{Architectural ablations}\\[2pt]
\begin{tabular}{lrrr}
\toprule
Variant & $\Delta$ NLPD & $\Delta$ CRPS & $\Delta$ MSE \\
\midrule
Softmax gate     & $+3.4_{\,4.3}$  & $+0.9_{\,0.9}$  & $+0.6_{\,2.2}$  \\
No deep mean     & $+29.0_{\,5.9}$ & $+11.6_{\,2.8}$ & $+23.5_{\,9.5}$ \\
No resid.\ var.  & $+0.1_{\,4.6}$  & $+0.5_{\,1.0}$  & $+0.5_{\,2.0}$  \\
Shared lik.      & $+8.3_{\,4.1}$  & $+2.3_{\,1.2}$  & $-1.0_{\,1.2}$  \\
Single kernel    & $+73.6_{\,5.1}$ & $+23.8_{\,1.2}$ & $+9.5_{\,2.2}$  \\
\bottomrule
\end{tabular}
\end{minipage}%
\hfill
\begin{minipage}[t]{0.48\linewidth}
\centering
\textit{Methodological controls}\\[2pt]
\begin{tabular}{lrrr}
\toprule
Variant & $\Delta$ NLPD & $\Delta$ CRPS & $\Delta$ MSE \\
\midrule
\DeRegime{} ($\mathcal{N}$)      & $+43.7_{\,5.4}$ & $+6.4_{\,3.3}$  & $+8.3_{\,1.7}$ \\
\DeRegime{} ($\mathcal{N}$ mix)  & $+10.2_{\,4.0}$ & $+1.5_{\,0.8}$  & $-1.6_{\,1.2}$  \\
MDN ($\mathcal{N}$)              & $+23.8_{\,5.8}$ & $+4.8_{\,2.5}$  & $+10.5_{\,4.6}$ \\
MDN ($t$)                        & $+23.8_{\,7.1}$ & $+4.3_{\,3.1}$  & $+9.1_{\,5.4}$ \\
\bottomrule
\end{tabular}
\end{minipage}
\end{table}

\section{Experiments}
\label{sec:experiments}

\paragraph{Setup and baselines.}
We use the standard long-horizon forecasting benchmarks \citep{zhou2021informer,wu2021autoformer,nie2023patchtst,qiu2024tfb} (Electricity Transformer Temperature (ETT) family, electricity, traffic, weather, exchange, illness), with electricity and traffic as univariate aggregates and exchange and weather as continuous-channel subsets, and add an extended Nasdaq daily-close series from 1990-01-02 to 2024-12-30, whose held-out test period (2017-12-26 to 2024-12-30) covers COVID-19, the inflation shock, and multiple market regimes. All density models share the encoder, \RevIN{} preprocessing, chronological split, and horizon aggregation. We report three encoder grids --- \PatchTST{} \citep{nie2023patchtst}, DLinear \citep{zeng2023dlinear}, and TimeMixer \citep{wang2024timemixer} --- with the headline tables using \PatchTST{}. We report MSE, continuous ranked probability score (CRPS), and proper NLPD on standard-scaled targets, averaged across horizons and arithmetically across seeds (quantile regression has no native density and is excluded from NLPD). Training used a mix of NVIDIA A100 80GB and B200-class GPUs with details provided in Appendix~\ref{app:compute}. Full experiment protocol is detailed in Appendix~\ref{app:experiment-details}. Code, configuration files, dataset-preparation scripts, and reproduction instructions are available at \url{https://github.com/kieranjwood/deregime}. The baseline matrix (Appendix~\ref{app:baseline-matrix}) isolates the source of gains across single dynamic distributions (Gaussian, Student-$t$), distribution-free intervals (quantile), and GP without regime mixing (DKL). MDN heads appear as ablations since they retain \DeRegime{}'s mixture-density component but drop the kernel-coupled GP residual.

\paragraph{Findings.}
The detailed \PatchTST{} grid is shown in Table~\ref{tab:results-patchtst}. Against the strongest encoder-matched baseline (Student-$t$ head, fixed encoder), \DeRegime{} reduces three-grid average NLPD/CRPS/MSE by $20.3\%/3.0\%/4.7\%$ (Table~\ref{tab:relative-summary}), with consistent gains across benchmarks spanning abrupt, gradual, and seasonal regime shifts. Ablations (Table~\ref{tab:ablations-patchtst}) show the regime-mixing kernel and Student-$t$ tails carry most of the margin. DKL baselines without regime mixing and single-kernel collapse ($R{=}1$) lose substantially, while same-backbone MDN heads underperform the kernel-coupled mixture. Figure~\ref{fig:regime-diagnostics-main} surfaces four targeted regime-switching behaviours, with the remaining datasets studied in Appendix~\ref{app:diagnostics}. On Illness, outbreak windows coincide with activation of a regime whose parameter diagnostics have low $\nu_r$, linking the gate to heavier tails. On GBP exchange, an abrupt gate transition tracks the onset of the eurozone debt crisis. Aggregate Traffic shows recurring demand-state cycles encoded by only a few of the $R_{\max}=16$ candidate regimes, illustrating the stick-breaking pruning property. ETTm1 exhibits abrupt within-window regime changes whose allocation also differs across forecast horizons --- the horizon-indexed gate behaviour that distinguishes \DeRegime{} from a Markov-state model.
\begin{figure}[t]
\centering
\includegraphics[width=\linewidth]{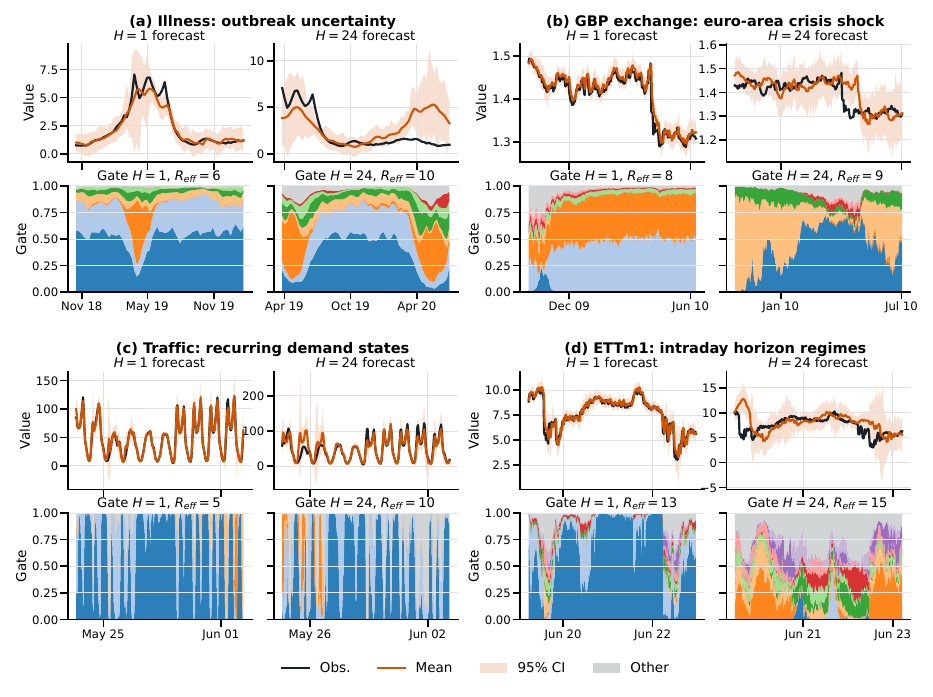}
\caption{Representative \DeRegime{} regime diagnostics: outbreak (Illness), eurozone-debt-crisis shift (GBP), few-state reuse (aggregate Traffic), and a horizon-indexed intraday transition (ETTm1). Gate colours per panel are local display ranks by average mass, with grey marking ``Other''.}
\label{fig:regime-diagnostics-main}
\end{figure}

\section{Discussion}
\label{sec:discussion}

\DeRegime{} targets direct multi-horizon probabilistic forecasting in heteroskedastic series with recurring uncertainty states. Gains are consistent across the benchmark suite, which spans abrupt, gradual, and seasonal regime shifts. The decomposition into interpretable regimes helps communicate predictive risk in real-world applications. Training cost is dominated by the sparse-GP $\mathcal{O}(M^3)$ Cholesky over $M$ inducing points (independent of dataset size $N$) and an $R_{\max}$-fold scaling from the regime-mixing kernel. Because the stick-breaking curriculum and gate-entropy regulariser concentrate mass on $R_{\mathrm{eff}}\ll R_{\max}$ regimes (Appendix~\ref{app:gate-details}), top-$k$ or low-mass-regime pruning is a natural deployment optimisation, though we do not benchmark its impact on predictive accuracy here. Diffusion, score-based, and flow forecasters use different scoring protocols (Appendix~\ref{app:generative-not-included}) and are not benchmarked.

Regimes are identifiable only up to permutation, so cross-seed comparison should use relative scales $\tau_r/\tau_{r'}$ and gate trajectories rather than absolute labels. Gauss--Hermite convergence requires a zero-free analytic-strip assumption (Appendix~\ref{app:gh-qualification}), though in practice the quadrature is a stable approximation. Finite-logit stick-breaking gives $\pi_r>0$, so a near-pruned regime can still set the asymptotic tail rate (Proposition~\ref{prop:tail-main}). The reported multivariate setting uses channel-independent marginal densities. Channels share encoders, regime parameters, and calibration scales, but the model is not a full joint predictive density over cross-channel residual covariance. Avenues for future work include per-regime base kernels beyond isotropic RBF, which are direct swap-ins since Theorem~\ref{thm:kernel-psd-main} holds for any PSD kernel, and pairing the regime-mixing head with channel-mixing backbones \citep{liu2023itransformer} or time-series foundation models \citep{ansari2024chronos,goswami2024moment,das2024timesfm,woo2024moirai} to amortise the regime parameters $(\tau_r,\nu_r)$ across datasets, especially where they are weakly identified.

\section{Acknowledgements}

KW would like to thank the Oxford-Man Institute of Quantitative Finance for its generous support.
SR would like to thank the U.K. Royal Academy of Engineering.

\bibliographystyle{plainnat}
{\small
\bibliography{references}

\begin{thebibliography}{50}
\providecommand{\natexlab}[1]{#1}
\providecommand{\url}[1]{\texttt{#1}}
\expandafter\ifx\csname urlstyle\endcsname\relax
  \providecommand{\doi}[1]{doi: #1}\else
  \providecommand{\doi}{doi: \begingroup \urlstyle{rm}\Url}\fi

\bibitem[Adams and MacKay(2007)]{adams2007bocpd}
Ryan~P. Adams and David J.~C. MacKay.
\newblock Bayesian online changepoint detection, 2007.
\newblock arXiv:0710.3742.

\bibitem[Alexandrov et~al.(2020)Alexandrov, Benidis, Bohlke-Schneider, Flunkert, Gasthaus, Januschowski, Maddix, Rangapuram, Salinas, Schulz, Stella, T{\"u}rkmen, and Wang]{alexandrov2020gluonts}
Alexander Alexandrov, Konstantinos Benidis, Michael Bohlke-Schneider, Valentin Flunkert, Jan Gasthaus, Tim Januschowski, Danielle~C. Maddix, Syama Rangapuram, David Salinas, Jasper Schulz, Lorenzo Stella, Ali~Caner T{\"u}rkmen, and Yuyang Wang.
\newblock {GluonTS}: Probabilistic and neural time series modeling in python.
\newblock \emph{Journal of Machine Learning Research}, 21\penalty0 (116):\penalty0 1--6, 2020.

\bibitem[Ansari et~al.(2024)Ansari, Stella, Turkmen, Zhang, Mercado, Shen, Shchur, Rangapuram, Arango, Kapoor, Zschiegner, Maddix, Mahoney, Torkkola, Wilson, Bohlke-Schneider, and Wang]{ansari2024chronos}
Abdul~Fatir Ansari, Lorenzo Stella, Caner Turkmen, Xiyuan Zhang, Pedro Mercado, Huibin Shen, Oleksandr Shchur, Syama~Sundar Rangapuram, Sebastian~Pineda Arango, Shubham Kapoor, Jasper Zschiegner, Danielle~C. Maddix, Michael~W. Mahoney, Kari Torkkola, Andrew~Gordon Wilson, Michael Bohlke-Schneider, and Yuyang Wang.
\newblock Chronos: Learning the language of time series.
\newblock \emph{Transactions on Machine Learning Research}, 2024.

\bibitem[Bishop(1994)]{bishop1994mixture}
Christopher~M. Bishop.
\newblock Mixture density networks.
\newblock Technical Report NCRG/94/004, Aston University, 1994.

\bibitem[Das et~al.(2024)Das, Kong, Sen, and Zhou]{das2024timesfm}
Abhimanyu Das, Weihao Kong, Rajat Sen, and Yichen Zhou.
\newblock A decoder-only foundation model for time-series forecasting.
\newblock In \emph{Proceedings of the International Conference on Machine Learning}, 2024.

\bibitem[Embrechts et~al.(1997)Embrechts, Kl{\"u}ppelberg, and Mikosch]{embrechts1997modelling}
Paul Embrechts, Claudia Kl{\"u}ppelberg, and Thomas Mikosch.
\newblock \emph{Modelling Extremal Events for Insurance and Finance}, volume~33 of \emph{Applications of Mathematics}.
\newblock Springer, 1997.

\bibitem[Fedus et~al.(2022)Fedus, Zoph, and Shazeer]{fedus2022switch}
William Fedus, Barret Zoph, and Noam Shazeer.
\newblock Switch transformers: Scaling to trillion parameter models with simple and efficient sparsity.
\newblock \emph{Journal of Machine Learning Research}, 23\penalty0 (120):\penalty0 1--39, 2022.

\bibitem[Fr{\"u}hwirth-Schnatter and Pyne(2010)]{fruhwirth2010bayesian}
Sylvia Fr{\"u}hwirth-Schnatter and Saumyadipta Pyne.
\newblock Bayesian inference for finite mixtures of univariate and multivariate skew-normal and skew-$t$ distributions.
\newblock \emph{Biostatistics}, 11\penalty0 (2):\penalty0 317--336, 2010.

\bibitem[Garnelo et~al.(2018{\natexlab{a}})Garnelo, Rosenbaum, Maddison, Ramalho, Saxton, Shanahan, Teh, Rezende, and Eslami]{garnelo2018conditional}
Marta Garnelo, Dan Rosenbaum, Christopher Maddison, Tiago Ramalho, David Saxton, Murray Shanahan, Yee~Whye Teh, Danilo Rezende, and S.~M.~Ali Eslami.
\newblock Conditional neural processes.
\newblock In \emph{Proceedings of the International Conference on Machine Learning}, 2018{\natexlab{a}}.

\bibitem[Garnelo et~al.(2018{\natexlab{b}})Garnelo, Schwarz, Rosenbaum, Viola, Rezende, Eslami, and Teh]{garnelo2018neural}
Marta Garnelo, Jonathan Schwarz, Dan Rosenbaum, Fabio Viola, Danilo~J. Rezende, S.~M.~Ali Eslami, and Yee~Whye Teh.
\newblock Neural processes, 2018{\natexlab{b}}.
\newblock arXiv:1807.01622.

\bibitem[Garnett et~al.(2010)Garnett, Osborne, Reece, Rogers, and Roberts]{garnett2010sequential}
Roman Garnett, Michael~A. Osborne, Steven Reece, Alex Rogers, and Stephen~J. Roberts.
\newblock Sequential {Bayesian} prediction in the presence of changepoints and faults.
\newblock \emph{The Computer Journal}, 53\penalty0 (9):\penalty0 1430--1446, 2010.

\bibitem[Gneiting and Raftery(2007)]{gneiting2007proper}
Tilmann Gneiting and Adrian~E. Raftery.
\newblock Strictly proper scoring rules, prediction, and estimation.
\newblock \emph{Journal of the American Statistical Association}, 102\penalty0 (477):\penalty0 359--378, 2007.

\bibitem[Goswami et~al.(2024)Goswami, Szafer, Choudhry, Cai, Li, and Dubrawski]{goswami2024moment}
Mononito Goswami, Konrad Szafer, Arjun Choudhry, Yifu Cai, Shuo Li, and Artur Dubrawski.
\newblock {MOMENT}: A family of open time-series foundation models.
\newblock In \emph{Proceedings of the International Conference on Machine Learning}, 2024.

\bibitem[Hamilton(1989)]{hamilton1989new}
James~D. Hamilton.
\newblock A new approach to the economic analysis of nonstationary time series and the business cycle.
\newblock \emph{Econometrica}, 57\penalty0 (2):\penalty0 357--384, 1989.

\bibitem[Hensman et~al.(2013)Hensman, Fusi, and Lawrence]{hensman2013gaussian}
James Hensman, Nicolo Fusi, and Neil~D. Lawrence.
\newblock Gaussian processes for big data.
\newblock In \emph{Proceedings of the Conference on Uncertainty in Artificial Intelligence}, 2013.

\bibitem[Holzmann et~al.(2006)Holzmann, Munk, and Gneiting]{holzmann2006identifiability}
Hajo Holzmann, Axel Munk, and Tilmann Gneiting.
\newblock Identifiability of finite mixtures of elliptical distributions.
\newblock \emph{Scandinavian Journal of Statistics}, 33\penalty0 (4):\penalty0 753--763, 2006.

\bibitem[Ishwaran and James(2001)]{ishwaran2001gibbs}
Hemant Ishwaran and Lancelot~F. James.
\newblock Gibbs sampling methods for stick-breaking priors.
\newblock \emph{Journal of the American Statistical Association}, 96\penalty0 (453):\penalty0 161--173, 2001.

\bibitem[Jacobs et~al.(1991)Jacobs, Jordan, Nowlan, and Hinton]{jacobs1991adaptive}
Robert~A. Jacobs, Michael~I. Jordan, Steven~J. Nowlan, and Geoffrey~E. Hinton.
\newblock Adaptive mixtures of local experts.
\newblock \emph{Neural Computation}, 3\penalty0 (1):\penalty0 79--87, 1991.

\bibitem[Jordan and Jacobs(1994)]{jordan1994hierarchical}
Michael~I. Jordan and Robert~A. Jacobs.
\newblock Hierarchical mixtures of experts and the {EM} algorithm.
\newblock \emph{Neural Computation}, 6\penalty0 (2):\penalty0 181--214, 1994.

\bibitem[Kim et~al.(2022)Kim, Kim, Tae, Park, Choi, and Choo]{kim2022revin}
Taesung Kim, Jinhee Kim, Yunwon Tae, Cheonbok Park, Jang-Ho Choi, and Jaegul Choo.
\newblock Reversible instance normalization for accurate time-series forecasting against distribution shift.
\newblock In \emph{International Conference on Learning Representations}, 2022.

\bibitem[Kollovieh et~al.(2023)Kollovieh, Ansari, Bohlke-Schneider, Zschiegner, Wang, and Wang]{kollovieh2023predict}
Marcel Kollovieh, Abdul~Fatir Ansari, Michael Bohlke-Schneider, Jasper Zschiegner, Hao Wang, and Yuyang Wang.
\newblock Predict, refine, synthesize: Self-guiding diffusion models for probabilistic time series forecasting.
\newblock In \emph{Advances in Neural Information Processing Systems}, 2023.

\bibitem[Lange et~al.(1989)Lange, Little, and Taylor]{lange1989robust}
Kenneth~L. Lange, Roderick J.~A. Little, and Jeremy M.~G. Taylor.
\newblock Robust statistical modeling using the $t$ distribution.
\newblock \emph{Journal of the American Statistical Association}, 84\penalty0 (408):\penalty0 881--896, 1989.

\bibitem[Lim et~al.(2021)Lim, Arik, Loeff, and Pfister]{lim2021tft}
Bryan Lim, Sercan~{\"O}. Arik, Nicolas Loeff, and Tomas Pfister.
\newblock Temporal fusion transformers for interpretable multi-horizon time series forecasting.
\newblock \emph{International Journal of Forecasting}, 37\penalty0 (4):\penalty0 1748--1764, 2021.

\bibitem[Liu et~al.(2023)Liu, Hu, Zhang, Wu, Wang, Ma, and Long]{liu2023itransformer}
Yong Liu, Tengge Hu, Haoran Zhang, Haixu Wu, Shiyu Wang, Lintao Ma, and Mingsheng Long.
\newblock i{T}ransformer: Inverted transformers are effective for time series forecasting, 2023.
\newblock arXiv:2310.06625.

\bibitem[McLachlan and Basford(1988)]{mclachlan1988mixture}
Geoffrey~J. McLachlan and Kaye~E. Basford.
\newblock \emph{Mixture Models: Inference and Applications to Clustering}.
\newblock Marcel Dekker, 1988.

\bibitem[McLachlan and Peel(2000)]{mclachlan2000finite}
Geoffrey~J. McLachlan and David Peel.
\newblock \emph{Finite Mixture Models}.
\newblock Wiley, 2000.

\bibitem[Nie et~al.(2023)Nie, Nguyen, Sinthong, and Kalagnanam]{nie2023patchtst}
Yuqi Nie, Nam~H. Nguyen, Phanwadee Sinthong, and Jayant Kalagnanam.
\newblock A time series is worth 64 words: Long-term forecasting with transformers.
\newblock In \emph{International Conference on Learning Representations}, 2023.

\bibitem[Peel and McLachlan(2000)]{peel2000robust}
David Peel and Geoffrey~J. McLachlan.
\newblock Robust mixture modelling using the $t$ distribution.
\newblock \emph{Statistics and Computing}, 10\penalty0 (4):\penalty0 339--348, 2000.

\bibitem[Qiu et~al.(2024)Qiu, Hu, Zhou, Wu, Du, Zhang, Guo, Zhou, Jensen, Sheng, and Yang]{qiu2024tfb}
Xiangfei Qiu, Jilin Hu, Lekui Zhou, Xingjian Wu, Junyang Du, Buang Zhang, Chenjuan Guo, Aoying Zhou, Christian~S. Jensen, Zhenli Sheng, and Bin Yang.
\newblock {TFB}: Towards comprehensive and fair benchmarking of time series forecasting methods.
\newblock \emph{Proceedings of the VLDB Endowment}, 17\penalty0 (9):\penalty0 2363--2377, 2024.
\newblock \doi{10.14778/3665844.3665863}.

\bibitem[Rasmussen and Ghahramani(2002)]{rasmussen2002infinite}
Carl~E. Rasmussen and Zoubin Ghahramani.
\newblock Infinite mixtures of gaussian process experts.
\newblock In \emph{Advances in Neural Information Processing Systems}, 2002.

\bibitem[Rasmussen and Williams(2006)]{rasmussen2006gpml}
Carl~E. Rasmussen and Christopher K.~I. Williams.
\newblock \emph{Gaussian Processes for Machine Learning}.
\newblock MIT Press, 2006.

\bibitem[Rasul et~al.(2020)Rasul, Sheikh, Schuster, Bergmann, and Vollgraf]{rasul2020multivariate}
Kashif Rasul, Abdul-Saboor Sheikh, Ingmar Schuster, Urs Bergmann, and Roland Vollgraf.
\newblock Multivariate probabilistic time series forecasting via conditioned normalizing flows.
\newblock In \emph{International Conference on Learning Representations}, 2020.

\bibitem[Rasul et~al.(2021)Rasul, Seward, Schuster, and Vollgraf]{rasul2021autoregressive}
Kashif Rasul, Calvin Seward, Ingmar Schuster, and Roland Vollgraf.
\newblock Autoregressive denoising diffusion models for multivariate probabilistic time series forecasting.
\newblock In \emph{Proceedings of the International Conference on Machine Learning}, 2021.

\bibitem[Salinas et~al.(2020)Salinas, Flunkert, Gasthaus, and Januschowski]{salinas2020deepar}
David Salinas, Valentin Flunkert, Jan Gasthaus, and Tim Januschowski.
\newblock {DeepAR}: Probabilistic forecasting with autoregressive recurrent networks.
\newblock \emph{International Journal of Forecasting}, 36\penalty0 (3):\penalty0 1181--1191, 2020.

\bibitem[Sethuraman(1994)]{sethuraman1994constructive}
Jayaram Sethuraman.
\newblock A constructive definition of {Dirichlet} priors.
\newblock \emph{Statistica Sinica}, 4\penalty0 (2):\penalty0 639--650, 1994.

\bibitem[Shazeer et~al.(2017)Shazeer, Mirhoseini, Maziarz, Davis, Le, Hinton, and Dean]{shazeer2017outrageously}
Noam Shazeer, Azalia Mirhoseini, Krzysztof Maziarz, Andy Davis, Quoc Le, Geoffrey Hinton, and Jeff Dean.
\newblock Outrageously large neural networks: The sparsely-gated mixture-of-experts layer.
\newblock In \emph{International Conference on Learning Representations}, 2017.

\bibitem[Teicher(1963)]{teicher1963identifiability}
Henry Teicher.
\newblock Identifiability of finite mixtures.
\newblock \emph{Annals of Mathematical Statistics}, 34\penalty0 (4):\penalty0 1265--1269, 1963.

\bibitem[Titsias(2009)]{titsias2009variational}
Michalis Titsias.
\newblock Variational learning of inducing variables in sparse gaussian processes.
\newblock In \emph{Proceedings of the International Conference on Artificial Intelligence and Statistics}, 2009.

\bibitem[Trefethen(2008)]{trefethen2008gauss}
Lloyd~N. Trefethen.
\newblock Is gauss quadrature better than clenshaw--curtis?
\newblock \emph{SIAM Review}, 50\penalty0 (1):\penalty0 67--87, 2008.

\bibitem[Tresp(2001)]{tresp2001mixtures}
Volker Tresp.
\newblock Mixtures of gaussian processes.
\newblock In \emph{Advances in Neural Information Processing Systems}, 2001.

\bibitem[Volodina and Williamson(2020)]{volodina2020diagnostics}
Victoria Volodina and Daniel Williamson.
\newblock Diagnostics-driven nonstationary emulators using kernel mixtures.
\newblock \emph{SIAM/ASA Journal on Uncertainty Quantification}, 8\penalty0 (1):\penalty0 1--26, 2020.

\bibitem[Wang et~al.(2024)Wang, Wu, Shi, Hu, Luo, Ma, Zhang, and Zhou]{wang2024timemixer}
Shiyu Wang, Haixu Wu, Xiaoming Shi, Tengge Hu, Huakun Luo, Lintao Ma, James~Y. Zhang, and Jun Zhou.
\newblock {TimeMixer}: Decomposable multiscale mixing for time series forecasting.
\newblock In \emph{Proceedings of the International Conference on Learning Representations}, 2024.

\bibitem[Wilson et~al.(2016)Wilson, Hu, Salakhutdinov, and Xing]{wilson2016deep}
Andrew~G. Wilson, Zhiting Hu, Ruslan Salakhutdinov, and Eric~P. Xing.
\newblock Deep kernel learning.
\newblock In \emph{Proceedings of the International Conference on Artificial Intelligence and Statistics}, 2016.

\bibitem[Woo et~al.(2024)Woo, Liu, Kumar, Xiong, Savarese, and Sahoo]{woo2024moirai}
Gerald Woo, Chenghao Liu, Akshat Kumar, Caiming Xiong, Silvio Savarese, and Doyen Sahoo.
\newblock Unified training of universal time series forecasting transformers.
\newblock In \emph{Proceedings of the International Conference on Machine Learning}, 2024.

\bibitem[Wood et~al.(2022)Wood, Roberts, and Zohren]{wood2022slow}
Kieran Wood, Stephen Roberts, and Stefan Zohren.
\newblock Slow momentum with fast reversion: A trading strategy using deep learning and changepoint detection.
\newblock \emph{The Journal of Financial Data Science}, 4\penalty0 (1):\penalty0 111--129, 2022.
\newblock \doi{10.3905/jfds.2021.1.081}.

\bibitem[Wu et~al.(2021)Wu, Xu, Wang, and Long]{wu2021autoformer}
Haixu Wu, Jiehui Xu, Jianmin Wang, and Mingsheng Long.
\newblock Autoformer: Decomposition transformers with auto-correlation for long-term series forecasting.
\newblock In \emph{Advances in Neural Information Processing Systems}, 2021.

\bibitem[Yakowitz and Spragins(1968)]{yakowitz1968identifiability}
Sidney~J. Yakowitz and John~D. Spragins.
\newblock On the identifiability of finite mixtures.
\newblock \emph{Annals of Mathematical Statistics}, 39\penalty0 (1):\penalty0 209--214, 1968.

\bibitem[Yan et~al.(2021)Yan, Zhang, Zhou, Zhan, and Xia]{yan2021scoregrad}
Tijin Yan, Hongwei Zhang, Tong Zhou, Yufeng Zhan, and Yuanqing Xia.
\newblock {ScoreGrad}: Multivariate probabilistic time series forecasting with continuous energy-based generative models, 2021.
\newblock arXiv:2106.10121.

\bibitem[Zeng et~al.(2023)Zeng, Chen, Zhang, and Xu]{zeng2023dlinear}
Ailing Zeng, Muxi Chen, Lei Zhang, and Qiang Xu.
\newblock Are transformers effective for time series forecasting?
\newblock In \emph{Proceedings of the AAAI Conference on Artificial Intelligence}, volume~37, pages 11121--11128, 2023.
\newblock \doi{10.1609/aaai.v37i9.26317}.

\bibitem[Zhou et~al.(2021)Zhou, Zhang, Peng, Zhang, Li, Xiong, and Zhang]{zhou2021informer}
Haoyi Zhou, Shanghang Zhang, Jieqi Peng, Shuai Zhang, Jianxin Li, Hui Xiong, and Wancai Zhang.
\newblock Informer: Beyond efficient transformer for long sequence time-series forecasting.
\newblock In \emph{Proceedings of the AAAI Conference on Artificial Intelligence}, 2021.

\end{thebibliography}
}

\appendix

\section{Supplementary model details}
\label{app:model-details}

\subsection{Changepoint model search and the integrated alternative}
\label{app:changepoint-details}

Classical Bayesian changepoint analysis can be framed as model comparison over structural hypotheses $M_K$, where $K$ denotes the number of changepoints or regimes:
\[
p(M_K\mid\mathcal D)
=
\frac{p(\mathcal D\mid M_K)p(M_K)}
{\sum_{K'}p(\mathcal D\mid M_{K'})p(M_{K'})}.
\]
Here $p(\mathcal D\mid M_K)$ is a marginal likelihood: it integrates parameters and, in changepoint settings, sums or integrates over latent locations and allocations. The predictive distribution therefore averages not only over parameters but also over structural explanations. For a single changepoint $\tau$, a GP model may use one covariance before $\tau$ and another after $\tau$, with prediction requiring either conditioning on $\tau$ or marginalising it:
\[
p(y_*\mid x_*,\mathcal{D})
=
\int p(y_*\mid x_*,\mathcal{D},\tau)\,p(\tau\mid\mathcal{D})\,d\tau.
\]
For multiple changepoints this becomes a sum over ordered partitions and regime allocations. If covariance hyperparameters are also unknown, one must additionally integrate them out, as in sequential Bayesian GP prediction with changepoints and faults \citep{garnett2010sequential}. Online changepoint methods make this tractable in specialised settings by updating a posterior over run length \citep{adams2007bocpd}, and changepoint signals have also been used to adapt deep sequential strategies to rapid regime changes \citep{wood2022slow}. \DeRegime{} keeps the predictive spirit of these approaches but avoids explicit enumeration: the gate $\pi_r(x,t,d)$ is a differentiable approximation to a posterior allocation over regimes, and optimisation of the sparse variational ELBO replaces exact evidence computation.

\subsection{RevIN and density transformation}
\label{app:revin-density}

\RevIN{} normalises each input window with a channelwise location $m_d(x)$ and scale $s_d(x)>0$, and the model is fitted in the normalised target coordinate
\[
\tilde y_{t,d}=\frac{y_{t,d}-m_d(x)}{s_d(x)}.
\]
If a predictive density is evaluated in the original target coordinate, the density must be transformed by the change-of-variables rule:
\[
p_Y(y_{t,d}\mid x)=\frac{1}{s_d(x)}
p_{\tilde Y}\!\left(\frac{y_{t,d}-m_d(x)}{s_d(x)}\middle|x\right).
\]
For a multivariate marginal over channels, the Jacobian factor is $\prod_d s_d(x)^{-1}$. The benchmark table reports metrics on the standard scaled targets, but this transformation is required whenever original-scale NLPD is reported.

\subsection{Gate parameterisation}
\label{app:gate-details}

Each unconstrained gate logit $\gamma_r$ is mapped to a break fraction by $v_r=\sigm(\gamma_r)$, and the $v_r$ then enter the deterministic stick-breaking map of Eq.~\eqref{eq:stick-main} to produce the simplex weights $\pi_r(\xi)$. This follows the same stick-breaking algebra used in Dirichlet-process and finite truncation constructions \citep{sethuraman1994constructive,ishwaran2001gibbs}, but in the reported configuration it is a deterministic simplex parameterisation, not a random Dirichlet-process prior. A small symmetric Dirichlet-style penalty can be applied to the realised weights,
\[
\lambda_\Delta\left[-(\alpha_\Delta-1)\frac{1}{N}\sum_{n,r}\log \pi_r(\xi_n)
+R\log\Gamma(\alpha_\Delta)-\log\Gamma(R\alpha_\Delta)\right],
\]
but the likelihood term remains the main training signal.

For finite logits the map is a bijection from $\R^{R-1}$ to the open simplex. The ordered construction gives earlier components first access to the stick, so they can represent common residual states, while later components remain available for rarer states. Unused components can be suppressed by driving earlier sticks toward one or the component's own stick toward zero; the last component receives the unbroken remainder. In practice we report regime usage through the average gate vector, gate entropy, and
\[
\bar\pi_r=\E_\xi[\pi_r(\xi)],\qquad
R_{\mathrm{eff}}=\sum_{r=1}^{R_{\max}}\mathbf{1}\{\bar\pi_r>10^{-2}\}.
\]
These are diagnostics of the learned finite truncation, not posterior probabilities under an infinite Bayesian nonparametric model.

\paragraph{Gate curriculum.}
The reported \DeRegime{} runs use a short gate curriculum. The stick-breaking temperature is annealed from $1.0$ to $0.2$ over 50 epochs, so early training explores soft allocations before the gate sharpens. The symmetric simplex parameter is annealed from $\alpha_\Delta=2.0$ to $0.9$ over the same horizon: the initial value favours broad activation across regimes, while the final value allows mild sparsity without forcing hard pruning. For more aggressive pruning one can instead end at $\alpha_\Delta=0.5$, but we found the milder final value gave better predictive scores. The batch-marginal entropy weight $\lambda_{\mathrm{batch}}$ in Eq.~\eqref{eq:entropy-objective-app} is annealed from $3\times 10^{-4}$ to $10^{-6}$, encouraging multiple regimes to be active early before the likelihood determines which survive; the point-entropy weight is set to $\lambda_{\mathrm{point}}=0$ in the reported runs.

\subsection{Observation model decomposition}
\label{app:component-decomposition}

The predictive location is $\mu_\theta(\xi)+\delta_\xi$, where the deterministic mean path handles the conditional location and the GP residual captures structured error around that mean. The GP residual has a gate-weighted constant prior mean $m_0(\xi)=\sum_r\pi_r(\xi)b_r$, with one learned scalar offset $b_r$ per regime, plus sparse GP residual variation. This lets a regime absorb small jumps or persistent residual bias without turning the model into a set of independent regime-specific mean forecasters. The regime-specific scale in Eq.~\eqref{eq:scale-main} separates three effects: a channel scale $c_d$, a regime multiplier $\tau_r$, and an input-horizon residual variance $v_{\mathrm{res}}(x,t,d)$. The degrees of freedom $\nu_r$ are learned per regime. Thus regimes can differ in a constant residual offset, variance, and tail thickness while the main deterministic location remains shared.

This decomposition makes the failure modes interpretable. Poor point accuracy implicates the deterministic mean or the GP residual mean. Poor coverage with accurate point forecasts implicates the regime scale multipliers, the residual-variance head, or the tail parameters. A high-entropy gate indicates transition or ambiguity between regimes, while a low-entropy gate with a large $\tau_r$ or small $\nu_r$ identifies a confident high-risk state. The same trajectory can therefore be assigned to different uncertainty regimes without changing its deterministic mean forecast.

\paragraph{Components shared across regimes.}
Regimes share the deterministic mean $\mu_\theta$ and the input-dependent variance floor $v_{\mathrm{res}}$, and differ through the residual offset $b_r$ and the observation shape parameters $(\tau_r,\nu_r)$. Table~\ref{tab:shared-across-regimes} summarises the split.
\begin{table}[h]
\caption{Components of the predictive density and whether they are shared across regimes.}
\label{tab:shared-across-regimes}
\centering
\small
\begin{tabular}{lc}
\toprule
Component & Shared across regimes? \\
\midrule
$\mu_\theta(\xi)$ (deterministic mean path) & Yes \\
$\delta_\xi$ (GP latent residual) & Mostly shared; includes gate-weighted $b_r$ offset \\
$\pi_r(\xi)$ (mixture weight) & N/A — defines the mixture \\
$c_d$ (channel/task calibration scale) & Yes \\
$b_r$ (per-regime residual offset) & No \\
$\tau_r$ (per-regime scale multiplier) & No \\
$\nu_r$ (per-regime tail) & No \\
$v_{\mathrm{res}}(\xi)$ (residual variance head) & Yes \\
$\sigma_{\mathrm{floor}}^{\,2}$ (noise floor) & Yes \\
\bottomrule
\end{tabular}
\end{table}

\paragraph{Conditioning of $\mu_\theta$ and $v_{\mathrm{res}}$.}
The mean head $\mu_\theta(\xi)$ reads only the encoded past sequence; the residual variance head $v_{\mathrm{res}}(\xi)$ reads the regime-aware feature state $\phi(\xi)=[\pi_r(\xi);z_r(\xi)]_r$ that the GP itself consumes. This asymmetry decouples conditional location estimation from regime allocation: routing $\mu_\theta$ through $\phi$ would couple the mean to the gate and force the regime mixture to absorb mean-fitting residuals, while denying $v_{\mathrm{res}}$ access to $\phi$ would remove the conditioning needed to flag locations where the per-regime scale alone would underestimate dispersion.

\subsection{Likelihood variants and residual variance}
\label{app:likelihood-variants}

The principal \DeRegime{} likelihood is the Student-$t$ mixture in Eq.~\eqref{eq:predictive-main}. Each regime has a learned residual offset $b_r$, scale multiplier $\tau_r$, and degrees of freedom $\nu_r$, while the gate $\pi_r(\xi)$ selects which residual uncertainty state is active. The offset enters through the GP residual prior mean $m_0(\xi)=\sum_r\pi_r(\xi)b_r$, not through a separate regime-specific mean decoder. The residual variance head adds a shared correction,
\[
\sigma_r^2(\xi)=(c_d\tau_r)^2+v_{\mathrm{res}}(\xi)+\sigma_{\mathrm{floor}}^2,
\]
where $v_{\mathrm{res}}(\xi)\ge 0$ is predicted from the regime-aware feature state. This term is shared across regimes, so it acts as an input-dependent variance floor rather than as a new component identity. If it vanishes, the model reduces to the static channel/regime scale model.

There are two Gaussian analogues, which should not be conflated. The direct Gaussian mixture replaces each Student-$t$ component with a Gaussian component:
\[
p_{\mathrm{GM}}(y_\xi\mid \xi)=
\int q(\delta_\xi\mid x)
\sum_{r=1}^{R_{\max}}\pi_r(\xi)
\Normal\!\left(y_\xi;\mu_\theta(\xi)+\delta_\xi,\sigma_r^2(\xi)\right)
\,d\delta_\xi .
\]
This is the Gaussian-mixture counterpart of the Student-$t$ mixture \citep{mclachlan1988mixture,mclachlan2000finite}. Finite Student-$t$ mixtures are the corresponding robust mixture family for heavy-tailed observations \citep{peel2000robust,fruhwirth2010bayesian}. The Gaussian \DeRegime{} ablation in the benchmark instead uses the more GP-native heteroskedastic Gaussian form:
\[
p_{\mathrm{HG}}(y_\xi\mid \xi)=
\int q(\delta_\xi\mid x)
\Normal\!\left(y_\xi;\mu_\theta(\xi)+\delta_\xi,\sum_{r=1}^{R_{\max}}\pi_r(\xi)\sigma_r^2(\xi)\right)
\,d\delta_\xi .
\]
This is a single Gaussian with regime-weighted variance, not a finite Gaussian mixture. It is useful as a heteroskedastic deep-kernel GP ablation, while \DeRegime{} remains the main regime-mixture density.

\subsection{Sparse variational GP predictive equations}
\label{app:svgp-predictive}

The inducing-point approximation used by \DeRegime{} is the standard sparse variational GP construction \citep{titsias2009variational,hensman2013gaussian} applied with the valid kernel $K_{\mathrm{mix}}$ and residual prior mean $m_0$. Let $Z=\{\zeta_m\}_{m=1}^M$ denote inducing locations, $u=\delta(Z)$, $m_{0,Z}=m_0(Z)$, $K_{ZZ}=K_{\mathrm{mix}}(Z,Z)$, and $K_{\xi Z}=K_{\mathrm{mix}}(\xi,Z)$. With
\[
q(u)=\Normal(u;m,S),\qquad p(u)=\Normal(u;m_{0,Z},K_{ZZ}),
\]
the variational marginal at a forecast location $\xi$ is Gaussian:
\begin{align}
q(\delta_\xi\mid x)&=\Normal\!\left(\delta_\xi;m_\delta(\xi),s_\delta^2(\xi)\right), \nonumber\\
m_\delta(\xi)&=m_0(\xi)+K_{\xi Z}K_{ZZ}^{-1}\bigl(m-m_{0,Z}\bigr), \label{eq:svgp-mean-app}\\
s_\delta^2(\xi)&=K_{\xi\xi}
-K_{\xi Z}K_{ZZ}^{-1}\bigl(K_{ZZ}-S\bigr)K_{ZZ}^{-1}K_{Z\xi}. \label{eq:svgp-var-app}
\end{align}
Thus $m_\delta(\xi)$ is the learned residual correction around the deterministic mean, while $s_\delta^2(\xi)$ quantifies uncertainty in that correction. The KL term in Eq.~\eqref{eq:elbo-main} is the closed-form Gaussian $\KL(q(u)\Vert p(u))$. The non-Gaussian data term is one-dimensional at each forecast location because the likelihood depends on the scalar residual $\delta_\xi$ through $q(\delta_\xi\mid x)$; this is what makes Gauss--Hermite quadrature practical.

\subsection{Training objective and predictive inference}
\label{app:training-inference}

The objective minimised in experiments is the negative ELBO plus small deterministic gate-shaping terms:
\begin{equation}
\mathcal{J}
=-\mathcal{L}_{\mathrm{ELBO}}
+\mathcal{J}_{\Delta}
+\mathcal{J}_{\mathrm{aux}} .
\label{eq:total-loss-app}
\end{equation}
For the stick-breaking setting, the simplex penalty is a symmetric Dirichlet-style negative log density evaluated on the realised neural gate weights,
\begin{equation}
\mathcal{J}_{\Delta}
=\lambda_{\Delta}\left[
-(\alpha_{\Delta}-1)\frac{1}{|\mathcal{B}|H}\sum_{b,t,r}\log \pi_r(x_b,t)
+R\log\Gamma(\alpha_{\Delta})-\log\Gamma(R\alpha_{\Delta})
\right].
\label{eq:simplex-penalty-app}
\end{equation}
This term is not a Dirichlet-process prior and not a random Sethuraman stick-breaking construction. The stick-breaking map is deterministic; $\mathcal{J}_{\Delta}$ is only a small regulariser on the simplex outputs. The auxiliary term $\mathcal{J}_{\mathrm{aux}}$ may include batch-marginal entropy, pointwise entropy, residual-variance, gate-smoothness, or other ablation-specific penalties. In the reported setting these terms are deliberately small relative to the likelihood objective and are used to guide early regime use rather than to define the model.

Let $\mathcal{B}$ be a minibatch, $H$ the forecast horizon, and
\[
\bar\pi_r^{\mathcal{B}}
=\frac{1}{|\mathcal{B}|H}\sum_{b\in\mathcal{B}}\sum_{t=1}^{H}\pi_r(x_b,t).
\]
The batch-marginal entropy is
\begin{equation}
\mathcal{H}_{\mathrm{batch}}
=
-\sum_{r=1}^{R_{\max}}\bar\pi_r^{\mathcal{B}}\log \bar\pi_r^{\mathcal{B}},
\label{eq:batch-entropy-app}
\end{equation}
and the pointwise gate entropy is
\begin{equation}
\mathcal{H}_{\mathrm{point}}
=
\frac{1}{|\mathcal{B}|H}\sum_{b\in\mathcal{B}}\sum_{t=1}^{H}
\left[-\sum_{r=1}^{R_{\max}}\pi_r(x_b,t)\log \pi_r(x_b,t)\right].
\label{eq:point-entropy-app}
\end{equation}
With the minimisation convention in Eq.~\eqref{eq:total-loss-app}, the entropy part of the auxiliary objective is
\begin{equation}
\mathcal{J}_{\mathrm{ent}}
=
-\lambda_{\mathrm{batch}}\mathcal{H}_{\mathrm{batch}}
+\lambda_{\mathrm{point}}\mathcal{H}_{\mathrm{point}}.
\label{eq:entropy-objective-app}
\end{equation}
Thus the batch term encourages broad activation across regimes within a minibatch, preventing early collapse to a single component, whereas the point term encourages each individual forecast location to select a single regime. Point entropy can produce cleaner regime trajectories and more visually interpretable changepoint diagnostics, but in our benchmark it reduced predictive performance, so the reported models set $\lambda_{\mathrm{point}}=0$. We instead use only early batch-entropy encouragement and the mild $\alpha_\Delta:2.0\to0.9$ curriculum described above.

Predictive density evaluation and sampling use the same residual marginal $q(\delta_\xi\mid x)$:
\begin{enumerate}
\item Normalise the input window with \RevIN{} and compute $\mu_\theta(\xi)$, $\pi_r(\xi)$, $z_r(\xi)$, and $q(\delta_\xi\mid x)$ from Eqs.~\eqref{eq:svgp-mean-app}--\eqref{eq:svgp-var-app}.
\item For likelihood and NLPD, approximate $\E_{q(\delta_\xi\mid x)}[\log\sum_r\pi_r(\xi)p_r(y_\xi\mid \delta_\xi,\xi)]$ by Gauss--Hermite quadrature.
\item For samples, draw $\delta_\xi\sim q(\delta_\xi\mid x)$, draw $r\sim\mathrm{Categorical}(\pi(\xi))$, then draw $\tilde y_\xi$ from the selected Student-$t$ or Gaussian component with location $\mu_\theta(\xi)+\delta_\xi$ and scale from Eq.~\eqref{eq:scale-main}.
\item Transform sampled or summarised predictions back through the inverse \RevIN{} map when original-scale outputs are required.
\end{enumerate}
Predictive intervals and sample CRPS estimates are computed from these samples; headline NLPD uses the quadrature-evaluated density rather than a sampling approximation.

\section{Proofs}
\label{app:proofs}

\subsection{Matrix proof of Theorem~\ref{thm:kernel-psd-main}}

For data locations $\xi_1,\ldots,\xi_N$, let $G_r=\mathrm{diag}(\pi_r(\xi_1),\ldots,\pi_r(\xi_N))$ and let $K_r$ denote the Gram matrix with entries $K_r(z_r(\xi_i),z_r(\xi_j))$. Then the mixture Gram matrix is
\[
K_{\mathrm{mix}}=\sum_{r=1}^{R_{\max}}G_r K_r G_r.
\]
For any $a\in\R^N$,
\[
a^\top K_{\mathrm{mix}}a
=
\sum_{r=1}^{R_{\max}}(G_ra)^\top K_r(G_ra)\ge 0,
\]
because each $K_r$ is positive semi-definite. Hence $K_{\mathrm{mix}}$ is positive semi-definite.

\subsection{Properness}
\label{app:properness}

\paragraph{Definition.}
A predictive density $p(y\mid x)$ is \emph{proper} if it is non-negative and measurable in $y$ and satisfies $\int_{\R}p(y\mid x)\,dy=1$ for every input $x$ in the support. A proper density yields a strictly proper scoring rule via $S(p,y)=-\log p(y\mid x)$ \citep{gneiting2007proper}; this is the negative log predictive density (NLPD) used throughout the paper as the primary probabilistic-evaluation metric. Properness is therefore not a cosmetic condition: it is what makes the training objective and held-out scores comparable across models that define different density families.

\begin{proposition}[\DeRegime{} predictive density is proper]
\label{prop:properness-app}
Under the model defined in \S\ref{sec:method}, the predictive density $p(y\mid x)$ in Eq.~\eqref{eq:predictive-main} is proper for every input $x$.
\end{proposition}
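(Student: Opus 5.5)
The plan is to check the three defining conditions of properness in turn: non-negativity, measurability in $y$, and unit total mass. Tonelli's theorem will justify every interchange of integrals.

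First, I would fix an input $x$ and a location $\xi=(x,t,d)$ and collect the facts that hold under the model of \S\ref{sec:method}.
\begin{itemize}
\item The gate weights $\pi_r(\xi)$ lie in the probability simplex. For finite logits, the stick-breaking map of Eq.~\eqref{eq:stick-main} gives $\pi_r\ge 0$, and the sum telescopes to $\sum_r\pi_r=1$.
\item Each scale satisfies $\sigma_r^2(\xi)\ge\sigma_{\mathrm{floor}}^2>0$ by Eq.~\eqref{eq:scale-main}, since $v_{\mathrm{res}}\ge 0$. We also have $\nu_r>0$.
\item Each component $p_r(y\mid\delta,\xi)=\StudentT(y;\mu_\theta(\xi)+\delta,\sigma_r(\xi),\nu_r)$ is therefore a genuine Student-$t$ density in $y$ for every real $\delta$. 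It is jointly continuous in $(y,\delta)$, hence Borel measurable on $\R^2$.
\item By Theorem~\ref{thm:kernel-psd-main}, $K_{\mathrm{mix}}$ is PSD. The SVGP marginal in Eqs.~\eqref{eq:svgp-mean-app}--\eqref{eq:svgp-var-app} then has $s_\delta^2(\xi)\ge 0$ whenever $S$ is PSD.
\end{itemize}
If $s_\delta^2(\xi)>0$, $q(\delta_\xi\mid x)$ is a Gaussian density. If $s_\delta^2(\xi)=0$, I would treat it as a point mass and read the integral as integration against the measure. I will phrase the argument with a general probability measure $Q$ so that this degenerate case is covered.

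Second, for non-negativity and measurability: the integrand $\delta\mapsto q(\delta)\sum_r\pi_r p_r(y\mid\delta,\xi)$ is a finite non-negative combination of non-negative jointly measurable functions. Tonelli's theorem then makes $y\mapsto p(y\mid\xi)$ a measurable $[0,\infty]$-valued function. A crude bound shows the values are actually finite: $p_r\le C_{\nu_r}/\sigma_r$, so $p(y\mid\xi)\le\max_r C_{\nu_r}/\sigma_{\mathrm{floor}}<\infty$.

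Third, for normalisation, I would integrate over $y$ and apply Tonelli to swap $\int dy$ with both $\int dQ(\delta)$ and the finite sum. This gives
\[
\int p(y\mid\xi)\,dy=\sum_r\pi_r(\xi)\int\Bigl(\int p_r(y\mid\delta,\xi)\,dy\Bigr)dQ(\delta)=\sum_r\pi_r(\xi)\cdot 1=1.
\]
The inner integral equals $1$ because each Student-$t$ is normalised. The outer integral equals $1$ because $Q$ is a probability measure, and the final step uses the simplex property. Since $x$ and the location were arbitrary, the density is proper for every input $x$. A per-location check suffices because the benchmark likelihood is a product of scalar marginals.

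The main obstacle is not the computation but pinning down the hypotheses the statement leaves implicit. These are: that the gates genuinely lie in the simplex, including at $R_{\max}$ where the last weight absorbs the remainder; that $\sigma_r>0$, which is supplied by $\sigma_{\mathrm{floor}}$; and that $q$ is a probability measure even when $s_\delta^2=0$ or $S$ is only PSD. I would state these explicitly as the standing assumptions of \S\ref{sec:method} before invoking Tonelli. I would also remark that the inverse-\RevIN{} change of variables in Appendix~\ref{app:revin-density} preserves properness, because it is an affine bijection with Jacobian $s_d(x)^{-1}>0$.
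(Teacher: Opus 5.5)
Your proposal is correct and follows essentially the same route as the paper. Both arguments rest on four facts: the stick-breaking map gives simplex gates, each Student-$t$ component is proper for every $\delta$, the residual marginal $q(\delta_\xi\mid x)$ is a proper Gaussian, and Tonelli's theorem exchanges the $y$-integral with the finite mixture and the $\delta$-integral. Your extra steps (positivity of $\sigma_r$ via $\sigma_{\mathrm{floor}}$, joint measurability, the bound showing the density is finite-valued, and allowing a point mass when $s_\delta^2(\xi)=0$) only make explicit what the paper assumes holds ``by construction.''
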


\begin{proof}
Each Student-$t$ component density $p_r(y\mid\delta,\xi)=\StudentT(y;\mu_\theta(\xi)+\delta,\sigma_r(\xi),\nu_r)$ is proper for every $\delta$ because the Student-$t$ family is a proper location-scale distribution \citep{lange1989robust}. The stick-breaking gate produces simplex weights $\pi_r(\xi)\ge 0$ with $\sum_r\pi_r(\xi)=1$ by construction (Appendix~\ref{app:gate-details}), so for fixed $\delta$ the regime mixture
\[
g(y\mid\delta,\xi)=\sum_{r=1}^{R_{\max}}\pi_r(\xi)\,p_r(y\mid\delta,\xi)
\]
is a convex combination of proper densities. Hence $g(\cdot\mid\delta,\xi)\ge 0$, is measurable in $y$ as a finite sum of measurable functions, and integrates to $\sum_r\pi_r(\xi)\cdot 1=1$.

The variational residual marginal $q(\delta_\xi\mid x)=\Normal(\delta;m_\delta(\xi),s_\delta^2(\xi))$ from Eqs.~\eqref{eq:svgp-mean-app}--\eqref{eq:svgp-var-app} is a proper Gaussian by construction. The integrand $q(\delta_\xi\mid x)\,g(y\mid\delta,\xi)$ is jointly non-negative, so Tonelli's theorem permits the order of integration to be exchanged without integrability assumptions:
\[
\begin{aligned}
\int_\R p(y\mid x)\,dy
&=\int_\R\!\int_\R q(\delta_\xi\mid x)g(y\mid\delta,\xi)\,d\delta\,dy \\
&=\int_\R q(\delta_\xi\mid x)\!\left[\int_\R g(y\mid\delta,\xi)\,dy\right]\!d\delta
=\int_\R q(\delta_\xi\mid x)\,d\delta=1 .
\end{aligned}
\]
Non-negativity in $y$ is preserved under the outer integral because both $q$ and $g$ are non-negative. Thus $p(y\mid x)$ is a proper predictive density. \qed
\end{proof}

\paragraph{Consequences.}
Properness immediately implies that NLPD is a strictly proper scoring rule for \DeRegime{} \citep{gneiting2007proper}: the expected NLPD under any data-generating distribution is uniquely minimised by reporting the true predictive distribution, so cross-model NLPD comparisons inside a fixed scoring protocol are meaningful. The same argument applies to \DeRegime{}-$\mathcal{N}$ (Gaussian components) by replacing the Student-$t$ density above with a Gaussian, and to the heteroskedastic-Gaussian variant in Appendix~\ref{app:likelihood-variants}.

\subsection{ELBO}
\label{app:elbo}

This subsection derives the training objective as the standard sparse-VGP ELBO \citep{titsias2009variational,hensman2013gaussian} specialised to our regime-mixing kernel and Student-$t$ regime mixture observation likelihood. Both the kernel-validity and density-propriety results above feed into this construction: kernel validity (Theorem~\ref{thm:kernel-psd-main}) is what licenses the SVGP machinery on $K_{\mathrm{mix}}$, and density propriety (Proposition~\ref{prop:properness-app}) is what makes the data term a valid log-likelihood.

Let $Z=\{\zeta_m\}_{m=1}^M$ denote the inducing locations and $u=\delta(Z)\in\R^M$ the latent residual values at those locations, with prior $p(u)=\Normal(u;m_{0,Z},K_{ZZ})$ where $K_{ZZ}=K_{\mathrm{mix}}(Z,Z)$. The variational posterior $q(u)=\Normal(u;m,S)$ has free parameters $(m,S)$, and the per-location residual marginal $q(\delta_\xi\mid x)$ is obtained by the Gaussian conditional in Eqs.~\eqref{eq:svgp-mean-app}--\eqref{eq:svgp-var-app}. The marginal log-likelihood admits the standard SVGP decomposition,
\begin{equation}
\log p(y)
\;\ge\;
\sum_{n=1}^{N}\E_{q(\delta_n\mid x_n)}[\log p(y_n\mid\delta_n,x_n)]
\;-\;
\KL(q(u)\Vert p(u))\;\eqqcolon\;\mathcal{L}_{\mathrm{ELBO}},
\label{eq:elbo-app}
\end{equation}
where the inequality follows from Jensen applied to the variational expansion of $\log p(y)$, and the first term decomposes into a per-datapoint sum because the likelihood factorises across forecast locations conditional on $\delta$. Each per-datapoint expectation expands using the gate-weighted regime mixture,
\[
\E_{q(\delta_n\mid x_n)}\!\left[\log\!\sum_{r=1}^{R_{\max}}\pi_r(\xi_n)\,p_r(y_n\mid\delta_n,\xi_n)\right],
\]
which is the data term of Eq.~\eqref{eq:elbo-main}.

Two terms in the bound are tractable in closed form. The KL is the standard Gaussian--Gaussian divergence
\[
\KL(q(u)\Vert p(u))
=\tfrac12\!\left[\mathrm{tr}(K_{ZZ}^{-1}S)+(m-m_{0,Z})^\top K_{ZZ}^{-1}(m-m_{0,Z})-M+\log\!\frac{|K_{ZZ}|}{|S|}\right],
\]
implemented stably from Cholesky factors of $K_{ZZ}$ and $S$. The data term is one-dimensional at each forecast location because the regime mixture depends on the scalar residual $\delta_\xi$ through the proper Gaussian marginal $q(\delta_\xi\mid x)$; we evaluate it by $Q$-node Gauss--Hermite quadrature. The convergence rate and the resulting practical accuracy of the quadrature are quantified in Appendix~\ref{app:gh-qualification}, which is the only non-trivial step in evaluating $\mathcal{L}_{\mathrm{ELBO}}$.

Optimisation of $\mathcal{L}_{\mathrm{ELBO}}$ with respect to $(m,S)$, the encoder parameters, and the per-regime parameters $(\tau_r,\nu_r,b_r)$ is then standard mini-batch SGD with the auxiliary terms $\mathcal{J}_\Delta$ and $\mathcal{J}_{\mathrm{aux}}$ added as in Eq.~\eqref{eq:total-loss-app}. The full predictive-inference protocol (Gauss--Hermite NLPD; Monte-Carlo CRPS; \RevIN{}-inverse for original-scale outputs) is summarised in Appendix~\ref{app:training-inference}.

\subsection{Tail behaviour}
\label{app:tail}

For fixed $\delta$, a Student-$t$ density with degrees of freedom $\nu_r$ has tail proportional to $|y|^{-(\nu_r+1)}$ (equivalently, the survival function is regularly varying with index $-\nu_r$). Convolution with a Gaussian latent residual preserves this polynomial rate because the Gaussian tail is super-polynomial \citep{embrechts1997modelling}. A finite positive mixture of such regularly varying densities is dominated asymptotically by the slowest-decaying active component \citep{embrechts1997modelling}, yielding Proposition~\ref{prop:tail-main}.

\paragraph{Weak identification of pruned-regime tails.}
Stick-breaking yields $\pi_r(\xi)>0$ at finite logits, so Proposition~\ref{prop:tail-main} sets $\nu_{\mathrm{eff}}=\min_r\nu_r$ even when some $\pi_r$ are vanishingly small. This is a mathematical limit rather than a substantive claim: pruned-regime $\nu_r$ values are weakly identified, and the crossover at which they would dominate the predictive density is many orders of magnitude beyond any realistic $|y|$. We report $\nu_r$ only for active regimes ($\bar\pi_r>\epsilon$).

\subsection{Identifiability and regime relabelling}
\label{app:identifiability}

The regime labels in any finite mixture are identifiable only up to permutation \citep{teicher1963identifiability,yakowitz1968identifiability,mclachlan2000finite}. This is the correct interpretation of \DeRegime{} gates: the learned partition can be meaningful, but the numerical label ``regime 3'' has no intrinsic meaning across random seeds unless regimes are post-hoc aligned.

\begin{proposition}[Conditional finite-mixture identifiability]
\label{prop:identifiability-app}
Fix a forecast location $\xi=(x,t,d)$ and condition on a shared GP residual value $\delta$. Consider two Student-$t$ regime mixtures
\[
\sum_{r=1}^{R}\pi_r(\xi)\StudentT(y;\mu_\theta(\xi)+\delta,\sigma_r(\xi),\nu_r)
\quad\text{and}\quad
\sum_{r=1}^{R}\pi_r^*(\xi)\StudentT(y;\mu_\theta(\xi)+\delta,\sigma_r^*(\xi),\nu_r^*)
\]
with positive weights and pairwise distinct component pairs $(\sigma_r(\xi),\nu_r)$. If the two densities are equal as functions of $y$, then their component weights and parameters agree up to a permutation of the regime index.
\end{proposition}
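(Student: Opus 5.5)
Since both mixtures share the location $c=\mu_\theta(\xi)+\delta$, I would first translate $y\mapsto y-c$ so that every component is a centred scaled Student-$t$ density
\[
f_{\sigma,\nu}(y)=\frac{\Gamma\!\left(\frac{\nu+1}{2}\right)}{\Gamma\!\left(\frac{\nu}{2}\right)\sqrt{\nu\pi}\,\sigma}\left(1+\frac{y^2}{\nu\sigma^2}\right)^{-(\nu+1)/2}.
\]
The statement then reduces to a classical claim: the family $\{f_{\sigma,\nu}:\sigma>0,\nu>0\}$ is linearly independent over distinct parameter pairs. In the sense of \citet{yakowitz1968identifiability} and \citet{teicher1963identifiability}, this is equivalent to finite-mixture identifiability. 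Given independence, I would write the equality of the two mixtures as a single vanishing finite linear combination. This combination ranges over the union of the parameter pairs appearing on either side. Every coefficient must then vanish, which forces each pair on one side to appear on the other with the same weight. Because the weights are positive and the pairs within each mixture are pairwise distinct, this matching is a bijection, i.e.\ a permutation of the regime index.

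To prove linear independence, I would follow the ordering argument of \citet{teicher1963identifiability} and \citet{holzmann2006identifiability}, based on tail asymptotics. Suppose $\sum_{j=1}^{J}\lambda_j f_{\sigma_j,\nu_j}\equiv 0$ with distinct pairs and all $\lambda_j\neq0$. As $|y|\to\infty$, each term behaves like
\[
C_j\,|y|^{-(\nu_j+1)}\bigl(1+O(y^{-2})\bigr),\qquad C_j=\frac{\Gamma\!\left(\frac{\nu_j+1}{2}\right)(\nu_j\sigma_j^2)^{(\nu_j+1)/2}}{\Gamma\!\left(\frac{\nu_j}{2}\right)\sqrt{\nu_j\pi}\,\sigma_j}.
\]
This is the same regular-variation fact used for Proposition~\ref{prop:tail-main}. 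Take the smallest $\nu$ among the $\nu_j$. If exactly one index attains it, multiply the identity by $|y|^{\nu+1}$ and let $|y|\to\infty$; this gives $\lambda_jC_j=0$, a contradiction. I would then proceed by induction on $J$, removing the lightest-decay terms one at a time.

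The main obstacle is \emph{ties}: several indices can share the minimal $\nu$ but have different $\sigma_j$. For these I would not stop at the leading term. Instead I would expand the whole function $g_j(y)=(1+y^2/(\nu\sigma_j^2))^{-(\nu+1)/2}$ in powers of $w=1/y^2$:
\[
g_j=(\nu\sigma_j^2)^{(\nu+1)/2}|y|^{-(\nu+1)}\bigl(1+\nu\sigma_j^2 w\bigr)^{-(\nu+1)/2}.
\]
After factoring out $|y|^{-(\nu+1)}$, the tied terms form a function of $w$ that is analytic near $w=0$. The untied, heavier-exponent terms contribute $|y|^{-(\nu'+1)}$ with $\nu'>\nu$, which is $|y|^{-(\nu+1)}w^{(\nu'-\nu)/2}$ times a function analytic in $w$. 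I would collect terms by exponent class: the tied block is analytic in $w$, while terms with $\nu'-\nu\notin2\mathbb{Z}$ carry non-integer powers of $w$. The uniqueness of generalized power-series (Puiseux-type) expansions then lets me isolate the tied block, or at least its integer-power part.

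Within the tied block, the functions $(1+a_jw)^{-(\nu+1)/2}$ with distinct $a_j=\nu\sigma_j^2>0$ are linearly independent. One way to see this is that they have distinct singularities at $w=-1/a_j$ on the negative real axis, so analytic continuation separates them. Alternatively, I would use the Vandermonde-type structure of their Taylor coefficients in $a_j$. This forces the tied coefficients to vanish. I also need to handle exponents $\nu'$ with $\nu'-\nu\in2\mathbb{Z}$, which mix with the tied block. For these I would work directly with analytic continuation in complex $y$ rather than asymptotics: $f_{\sigma,\nu}$ continues to $\mathbb{C}$ minus branch cuts starting at the branch points $y=\pm i\sqrt{\nu}\sigma$. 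Approaching the branch point with the largest modulus, only the terms sharing that point (hence the same $\nu\sigma^2$) are singular there. Among those, distinct $\nu$ give distinct singularity exponents $-(\nu+1)/2$, which isolates a single term and completes the induction.
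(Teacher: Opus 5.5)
Your proof is correct, and it takes a different route from the paper, which gives no argument of its own. The paper cites the Yakowitz--Spragins equivalence between finite-mixture identifiability and linear independence, together with the elliptical-family result of \citet{holzmann2006identifiability}, which already covers Student-$t$ components with varying scale and degrees of freedom (and, more generally, varying location and multivariate scatter). You share the first step: reducing to linear independence of the centred densities $f_{\sigma,\nu}$, then matching to a permutation. (Distinctness on the starred side is in fact forced, since $R$ indices must cover $R$ distinct pairs of positive weight.) Where you differ is that you prove the key lemma yourself rather than citing it.

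That buys transparency. It shows that the tail-rate fact behind Proposition~\ref{prop:tail-main} cannot by itself separate regimes with tied $\nu_r$, because scale changes the tail only by a constant factor. It also exhibits the total order that does work: first by $\sqrt{\nu}\sigma$, then by $\nu$. This is the Fourier dual of ordering characteristic functions by exponential decay rate $\sqrt{\nu}\sigma$ and then by polynomial prefactor $|u|^{(\nu-1)/2}$, which is the mechanism of Teicher-type ordering criteria. The citation, in turn, buys brevity and generality.

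You can shorten your argument considerably. The singularity argument of your last paragraph handles every case on its own, so the tail asymptotics and the Puiseux bookkeeping (whose weak point was the $\nu'-\nu\in2\mathbb{Z}$ mixing) can be dropped. The cleanest version needs no branch cuts. With $a_j=\sqrt{\nu_j}\sigma_j$ and $f_j=f_{\sigma_j,\nu_j}$, every $f_j$ is analytic on the strip $|\Im y|<\min_k a_k$, so $\sum_j\lambda_j f_j\equiv0$ persists on $y=is$ for $0\le s<a_{\min}$. Let $\nu^*$ be the largest $\nu_j$ among the indices attaining $a_{\min}$. Multiplying by $(1-s^2/a_{\min}^2)^{(\nu^*+1)/2}$ and letting $s\uparrow a_{\min}$ forces that single coefficient to vanish, and induction finishes.

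Approaching the \emph{largest}-modulus branch point, as you propose, also works. However, that point lies on the principal cuts of all the other terms. You must therefore reach it from off the imaginary axis, or by continuing in from $y=\infty$, as your $w=1/y^2$ expansion implicitly does.
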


\noindent The proposition is a direct application of finite-mixture identifiability for Gaussian and Student-$t$/elliptical component families \citep{yakowitz1968identifiability,holzmann2006identifiability}. If there is an anchor forecast location at which all regimes have positive weight and component pairs are distinct, the same permutation aligns the global per-regime scale and tail parameters across neighbouring forecast locations. Marginalising over the Gaussian variational residual $q(\delta_\xi\mid x)$ preserves the same permutation invariance because $q(\delta_\xi\mid x)$ is shared across regime labels.

There are two remaining practical ambiguities. First, a global relabelling of all gates and per-regime parameters leaves the likelihood unchanged, so cross-seed diagnostic plots should align regimes by their learned scale/tail profiles or gate trajectories. Second, the absolute split between static regime scale and residual variance is not fully identified: adding a constant to $(c_d\tau_r)^2$ and subtracting it from $v_{\mathrm{res}}(\xi)$ leaves Eq.~\eqref{eq:scale-main} unchanged whenever non-negativity is preserved. Ratios between regime scales, variation in $v_{\mathrm{res}}(\xi)$ across inputs, and the predictive density are the stable interpretive objects.

\paragraph{Global scale degree of freedom and the $\prod_r\tau_r=1$ constraint.}
Permutation aside, the per-regime scale multipliers admit a separate global rescaling: the substitution $(\tau_r)\mapsto(k\tau_r)$ combined with $c_d\mapsto c_d/k$ leaves every $\sigma_r(\xi)$ — and hence the predictive density — unchanged. We pin this degree of freedom by parameterising the multipliers under the constraint
\[
\prod_{r=1}^{R}\tau_r \;=\; 1,
\]
using $R-1$ unconstrained $\log\tau$ weights and recovering the $R$-th from the closure relation. The geometric-mean normalisation places the absolute scale on the channel calibration $c_d$ and makes ratios $\tau_r/\tau_{r'}$ — which are what the model uses to discriminate calm from turbulent regimes — directly interpretable across runs and seeds. The constraint is orthogonal to the permutation symmetry of Proposition~\ref{prop:identifiability-app}: it scales all regimes by the same factor and so leaves $(\sigma_r/\sigma_{r'})$ ratios untouched, while permutation reshuffles which $\tau_r$ goes with which regime label.

\paragraph{Symmetry breaking at initialisation.}
The proposition's distinctness premise — that the parameter pairs $(\sigma_r(\xi),\nu_r)$ are pairwise distinct — fails at initialisation if every regime starts at the same point in parameter space, and gradient descent cannot then break the regime symmetry from gradient information alone. We therefore initialise per-regime parameters with positive jitter:
\[
\log\tau_r \sim \Normal(0,\sigma_\tau^2),
\qquad
\eta_r \sim \eta_{\mathrm{init}}+\Normal(0,\sigma_\eta^2),
\qquad
(\ell_r,a_r)\sim \mathrm{LogUniform},
\]
where $\eta_r$ is the unconstrained logit governing $\nu_r$ and $(\ell_r,a_r)$ are the per-regime RBF lengthscale and amplitude. In the reported benchmark runs, $\sigma_\tau=0.5$, $\sigma_\eta=0.3$, the channel calibration $c_d$ is initialised at $0.5$, and the kernel hyperparameters are sampled from $\ell_r\sim\mathrm{LogUniform}[0.5,5.0]$ and $a_r\sim\mathrm{LogUniform}[0.5,1.5]$. The combination ensures that every regime starts at a distinct point in parameter space, so the distinctness premise holds from the first training step.

\subsection{Gauss--Hermite qualification}
\label{app:gh-qualification}

The data-term integrand for each forecast location is
\[
g(\delta;\xi,y)=\log\!\left[\sum_{r=1}^{R_{\max}}\pi_r(\xi)\,p_r(y\mid\delta,\xi)\right],
\]
weighted by the Gaussian density $q(\delta_\xi\mid x)$ before integration. The log mixture is smooth on the real line because the mixture density is strictly positive there. Complex analyticity is more delicate: a complex-valued mixture may have zeros off the real axis even when it is positive on the real line, and at such zeros the log has a logarithmic singularity. Assumption~\ref{ass:gh-main} is therefore stated as a condition rather than claimed universally: we require that $g$ admit a holomorphic extension to a strip $|\Im\delta|<\rho$ around the real line and satisfy the Hermite growth bound $|g(\delta)|\le Ae^{B|\delta|^{\alpha}}$ for some $A,B>0$ and $\alpha<2$.

Under this assumption, classical Gauss--Hermite quadrature theory \citep{trefethen2008gauss} yields exponential convergence,
\[
\left|\E_{q(\delta_\xi\mid x)}[g(\delta;\xi,y)]-\sum_{q=1}^{Q}w_q\,g(\delta_q;\xi,y)\right|
\;\le\;
C\,e^{-\rho\sqrt{2Q}},
\]
where $\{(\delta_q,w_q)\}_{q=1}^Q$ are the standard Gauss--Hermite nodes and weights rescaled to the variance of $q(\delta_\xi\mid x)$, and $C$ depends on $A,B,\alpha,\rho$ but not on $Q$. With $Q=20$ nodes (the value used in our experiments; Table~\ref{tab:hparams}) this bound is tight enough to give sub-percent quadrature error for the analytic configurations encountered in training: in our setting the relevant strip width $\rho$ is bounded below by the smallest active per-regime scale $\sigma_r(\xi)$, which the variance floor and tail clamp $\nu_r\in[\nu_{\min},\nu_{\max}]$ keep away from zero. Quadrature error is therefore not a meaningful component of the reported NLPD.

\section{Experimental details and expanded results}
\label{app:experiment-details}

All results in Table~\ref{tab:results-patchtst} use scaled targets, full-horizon aggregation, and seeds $\{42,123,456\}$. The reported \DeRegime{} models use sequence-flatten horizon heads, $R_{\max}=16$, a deterministic stick-breaking gate, 512 inducing points, four-dimensional GP features, and the isotropic RBF base kernel in Eq.~\eqref{eq:rbf-base-main}. The Gaussian, Student-$t$, MDN, and single-kernel GP density baselines also use sequence-flatten horizon heads; the quantile baseline uses its linear horizon head. The main benchmark suite includes Gaussian, Student-$t$, quantile regression, DKL-RBF, DKL-RQ, and \DeRegime{}; MDN, MDN-$t$, the single-kernel \DeRegime{} ($R{=}1$) collapse, \DeRegime{} ($\mathcal{N}$), and \DeRegime{} ($\mathcal{N}$ mix) are reported as ablations. Quantile regression is excluded from NLPD comparisons because it does not define a native density.

\paragraph{Hyperparameters.}
Table~\ref{tab:hparams} summarises the shared hyperparameters across the benchmark. All neural baselines and \DeRegime{} share a \PatchTST{} encoder with the same patch size, depth, head count, and feed-forward expansion, so any predictive-density difference is attributable to the output side rather than the encoder. Two settings are deliberate design choices. (i) \emph{Channel-identity embeddings disabled globally.} All model families — \DeRegime{}, Gaussian/Student-$t$, MDN, quantile, and the DKL baselines — run with channel embeddings switched off, keeping the comparison focused on the probabilistic head rather than learned channel identity. (ii) \emph{Encoder dropout differs across families.} \DeRegime{} variants use $\mathrm{dropout}=0$ to keep the horizon-indexed regime gate stable; non-regime baselines (Gaussian, Student-$t$, MDN, quantile, single-kernel GPs) use $\mathrm{dropout}=0.2$, the canonical \PatchTST{} regularisation level. The split avoids handicapping the non-regime baselines while preserving the gate-stable setting required by the regime model. \emph{Tail bounds.} Per-regime degrees of freedom $\nu_r$ are confined to $[\nu_{\min},\nu_{\max}]=[4,100]$ via a sigmoid logit, so the predictive variance is finite at every $(x,t)$.

\begin{table}[h]
\caption{Shared hyperparameters for the reported benchmark family. ``DeRegime'' refers to the reported Student-$t$ regime-mixture model; Gaussian variants are ablations. ``DKL'' refers to DKL-RBF and DKL-RQ. ``Single-encoder heads'' refers to the Gaussian, Student-$t$, MDN, and Quantile heads.}
\label{tab:hparams}
\centering
\footnotesize
\setlength{\tabcolsep}{2pt}
\begin{tabular}{p{0.34\linewidth}p{0.19\linewidth}p{0.16\linewidth}p{0.22\linewidth}}
\toprule
& \DeRegime{} & DKL & Single-encoder heads \\
\midrule
\multicolumn{4}{l}{\textit{Encoder (\PatchTST{} backbone; regularisation shown below)}} \\
Patch length / stride & 16 / 8 & 16 / 8 & 16 / 8 \\
Layers / heads & 3 / 8 & 3 / 8 & 3 / 8 \\
$d_\text{ff}/d_\text{model}$ multiplier & 4 & 4 & 4 \\
Dropout & 0.0 & 0.2 & 0.2 \\
Channel-identity embedding & off & off & off \\
\midrule
\multicolumn{4}{l}{\textit{GP / kernel}} \\
GP feature dim. $d_g$ & 4 & 8 & --- \\
Inducing points $M$ & 512 & 512 & --- \\
Base kernel & RBF, isotropic & RBF / RQ, iso. & --- \\
\midrule
\multicolumn{4}{l}{\textit{Regime structure}} \\
Candidate components & 16 regimes & 1 kernel & 1 distribution \\
Effective regime count & post-hoc diagnostic & --- & --- \\
Gate parameterisation & finite stick-breaking & --- & --- \\
SB anneal $\alpha$ (init$\to$final, epochs) & $2.0\to0.9$, 50 & --- & --- \\
Gate temperature (init$\to$final, epochs) & $1.0\to0.2$, 50 & --- & --- \\
Batch entropy weight (init$\to$final) & $3{\times}10^{-4}\to10^{-6}$ & --- & --- \\
Point entropy weight & $0$ & --- & --- \\
Horizon persistence & off in reported \DeRegime{} & --- & --- \\
\midrule
\multicolumn{4}{l}{\textit{Likelihood}} \\
Student-$t$ df range $[\nu_{\min},\nu_{\max}]$ & $[4,100]$ & $[4,100]$ (ST only) & $[4,100]$ (ST only) \\
Noise floor $\sigma_{\mathrm{floor}}^2$ & $10^{-4}$ & $10^{-4}$ & $10^{-4}$ \\
Residual variance head $v_{\mathrm{res}}$ & on (gp\_features) & off & off \\
$\prod_r\tau_r=1$ constraint & enforced & N/A & N/A \\
\midrule
\multicolumn{4}{l}{\textit{Optimisation}} \\
Effective batch size & 512 & 512 & 128 \\
Optimizer & Adam & Adam & Adam \\
Learning rate (all groups) & $10^{-4}$ & $10^{-4}$ & $10^{-4}$ \\
Min epochs (patience starts after) & 50 & 50 & 0 \\
Early-stopping patience (val checks) & 50 & 50 & 50 \\
Gauss--Hermite nodes $Q$ & 20 & 20 & --- \\
\bottomrule
\end{tabular}
\end{table}

\paragraph{Seed aggregation.}
Each experiment is run independently for every configured seed, including model initialisation, dataloader order, inducing-point initialisation, and early stopping. Within a seed, we select the checkpoint with the best validation objective for that model family and evaluate it once on the held-out test block. Reported table entries are the arithmetic mean across seeds for the same dataset--model pair, with the subscript giving the corresponding per-seed standard deviation. No seed-wise reweighting is used; each seed contributes equally. Compact relative-change summaries are computed only after this seed aggregation, so each dataset--model pair contributes one averaged value per metric.

\subsection{Per-model results}
\label{app:per-model-results}

The headline result table in \S\ref{sec:experiments} reports the \PatchTST{} encoder grid (Table~\ref{tab:results-patchtst}). For completeness we also report the same metrics for the DLinear \citep{zeng2023dlinear} and TimeMixer \citep{wang2024timemixer} encoder backbones in Tables~\ref{tab:results-dlinear} and~\ref{tab:results-timemixer}; the same model families and aggregation conventions apply.

\begin{table}[t]
\caption{Per-model benchmark on the DLinear grid. Standard-scaled targets, all-horizon means; cell = mean with per-seed std subscript (3 seeds); lower better; bold/underline mark best/second-best per (metric, dataset). Dynamic probabilistic heads: DeepAR/GluonTS-style Gaussian and Student-$t$ \citep{salinas2020deepar,alexandrov2020gluonts}, TFT-style quantile \citep{lim2021tft} (no density, excluded from NLPD); deep-kernel GP heads: single-kernel DKL \citep{wilson2016deep}.}
\label{tab:results-dlinear}
\centering
\small
\setlength{\tabcolsep}{3pt}
\begin{tabular}{clcccccc}
\toprule
& & & \multicolumn{3}{c}{\textit{Dynamic probabilistic heads}} & \multicolumn{2}{c}{\textit{Deep-kernel GP heads}} \\
\cmidrule(lr){4-6}\cmidrule(lr){7-8}
& Dataset & \DeRegime{} & Gaussian & Student-$t$ & Quantile & RBF & RQ \\
\midrule
\multirow{10}{*}{\rotatebox[origin=c]{90}{\textbf{NLPD}}}
& ETTh1 & $\mathbf{0.576_{\,0.016}}$ & $0.733_{\,0.001}$ & $\second{0.650_{\,0.005}}$ & --- & $0.684_{\,0.001}$ & $0.687_{\,0.002}$ \\
& ETTh2 & $\mathbf{0.085_{\,0.012}}$ & $0.190_{\,0.006}$ & $\second{0.096_{\,0.000}}$ & --- & $0.398_{\,0.001}$ & $0.397_{\,0.003}$ \\
& ETTm1 & $\mathbf{0.274_{\,0.021}}$ & $0.635_{\,0.015}$ & $\second{0.388_{\,0.001}}$ & --- & $1.017_{\,0.001}$ & $1.024_{\,0.001}$ \\
& ETTm2 & $\mathbf{-0.278_{\,0.036}}$ & $0.809_{\,0.091}$ & $\second{-0.144_{\,0.008}}$ & --- & $1.051_{\,0.017}$ & $1.191_{\,0.012}$ \\
& Exchange & $\mathbf{-0.559_{\,0.006}}$ & $-0.453_{\,0.005}$ & $\second{-0.543_{\,0.002}}$ & --- & $-0.391_{\,0.001}$ & $-0.387_{\,0.003}$ \\
& Electricity & $\mathbf{-1.001_{\,0.007}}$ & $-0.643_{\,0.006}$ & $\second{-0.914_{\,0.005}}$ & --- & $0.203_{\,0.003}$ & $0.211_{\,0.001}$ \\
& Traffic & $\mathbf{-0.618_{\,0.018}}$ & $-0.309_{\,0.010}$ & $\second{-0.437_{\,0.005}}$ & --- & $0.376_{\,0.001}$ & $0.382_{\,0.000}$ \\
& Nasdaq & $\mathbf{0.854_{\,0.004}}$ & $0.927_{\,0.010}$ & $\second{0.879_{\,0.005}}$ & --- & $0.996_{\,0.010}$ & $1.046_{\,0.044}$ \\
& Illness & $\second{1.911_{\,0.034}}$ & $2.029_{\,0.023}$ & $\mathbf{1.875_{\,0.022}}$ & --- & $1.984_{\,0.037}$ & $1.980_{\,0.024}$ \\
& Weather & $\mathbf{-0.602_{\,0.009}}$ & $-0.364_{\,0.003}$ & $\second{-0.543_{\,0.002}}$ & --- & $-0.302_{\,0.001}$ & $-0.300_{\,0.002}$ \\
\midrule
\multirow{10}{*}{\rotatebox[origin=c]{90}{\textbf{CRPS}}}
& ETTh1 & $\mathbf{0.275_{\,0.002}}$ & $0.295_{\,0.001}$ & $0.288_{\,0.001}$ & $0.288_{\,0.001}$ & $\second{0.285_{\,0.000}}$ & $0.286_{\,0.001}$ \\
& ETTh2 & $\mathbf{0.171_{\,0.000}}$ & $0.184_{\,0.001}$ & $\second{0.172_{\,0.000}}$ & $0.174_{\,0.000}$ & $0.240_{\,0.001}$ & $0.239_{\,0.001}$ \\
& ETTm1 & $\mathbf{0.220_{\,0.002}}$ & $0.265_{\,0.003}$ & $\second{0.239_{\,0.000}}$ & $0.240_{\,0.001}$ & $0.395_{\,0.003}$ & $0.419_{\,0.003}$ \\
& ETTm2 & $\mathbf{0.143_{\,0.002}}$ & $0.299_{\,0.039}$ & $0.146_{\,0.000}$ & $\second{0.146_{\,0.000}}$ & $0.445_{\,0.007}$ & $0.546_{\,0.006}$ \\
& Exchange & $\second{0.082_{\,0.000}}$ & $0.084_{\,0.000}$ & $0.083_{\,0.000}$ & $\mathbf{0.082_{\,0.000}}$ & $0.089_{\,0.000}$ & $0.089_{\,0.001}$ \\
& Electricity & $\mathbf{0.059_{\,0.000}}$ & $0.068_{\,0.000}$ & $0.065_{\,0.000}$ & $\second{0.065_{\,0.000}}$ & $0.126_{\,0.000}$ & $0.127_{\,0.000}$ \\
& Traffic & $\mathbf{0.098_{\,0.002}}$ & $0.122_{\,0.000}$ & $0.121_{\,0.000}$ & $\second{0.118_{\,0.001}}$ & $0.161_{\,0.000}$ & $0.162_{\,0.000}$ \\
& Nasdaq & $\second{0.333_{\,0.000}}$ & $0.341_{\,0.004}$ & $0.337_{\,0.000}$ & $\mathbf{0.331_{\,0.002}}$ & $0.374_{\,0.010}$ & $0.398_{\,0.026}$ \\
& Illness & $0.941_{\,0.014}$ & $1.008_{\,0.008}$ & $\second{0.920_{\,0.004}}$ & $\mathbf{0.886_{\,0.013}}$ & $0.990_{\,0.009}$ & $1.004_{\,0.003}$ \\
& Weather & $\mathbf{0.114_{\,0.001}}$ & $0.127_{\,0.000}$ & $0.125_{\,0.000}$ & $\second{0.123_{\,0.000}}$ & $0.124_{\,0.000}$ & $0.125_{\,0.000}$ \\
\midrule
\multirow{10}{*}{\rotatebox[origin=c]{90}{\textbf{MSE}}}
& ETTh1 & $\second{0.329_{\,0.003}}$ & $0.338_{\,0.002}$ & $0.332_{\,0.000}$ & $0.332_{\,0.001}$ & $\mathbf{0.329_{\,0.002}}$ & $0.331_{\,0.002}$ \\
& ETTh2 & $\second{0.120_{\,0.000}}$ & $0.127_{\,0.000}$ & $\mathbf{0.119_{\,0.000}}$ & $0.122_{\,0.000}$ & $0.229_{\,0.001}$ & $0.228_{\,0.001}$ \\
& ETTm1 & $\mathbf{0.234_{\,0.001}}$ & $0.254_{\,0.001}$ & $0.250_{\,0.001}$ & $\second{0.248_{\,0.002}}$ & $0.485_{\,0.022}$ & $0.645_{\,0.025}$ \\
& ETTm2 & $\mathbf{0.093_{\,0.001}}$ & $0.097_{\,0.001}$ & $\second{0.095_{\,0.000}}$ & $0.095_{\,0.000}$ & $0.175_{\,0.020}$ & $0.199_{\,0.012}$ \\
& Exchange & $\second{0.026_{\,0.000}}$ & $0.026_{\,0.000}$ & $0.026_{\,0.000}$ & $\mathbf{0.026_{\,0.000}}$ & $0.027_{\,0.000}$ & $0.027_{\,0.000}$ \\
& Electricity & $\mathbf{0.016_{\,0.001}}$ & $0.022_{\,0.000}$ & $0.021_{\,0.000}$ & $0.021_{\,0.000}$ & $\second{0.018_{\,0.000}}$ & $0.018_{\,0.000}$ \\
& Traffic & $\mathbf{0.058_{\,0.001}}$ & $0.081_{\,0.000}$ & $0.085_{\,0.000}$ & $0.078_{\,0.002}$ & $\second{0.065_{\,0.000}}$ & $0.065_{\,0.001}$ \\
& Nasdaq & $\second{0.386_{\,0.004}}$ & $0.397_{\,0.006}$ & $0.393_{\,0.002}$ & $\mathbf{0.371_{\,0.003}}$ & $0.471_{\,0.037}$ & $0.550_{\,0.094}$ \\
& Illness & $\second{3.539_{\,0.014}}$ & $3.694_{\,0.030}$ & $3.566_{\,0.032}$ & $\mathbf{3.220_{\,0.082}}$ & $3.759_{\,0.058}$ & $3.894_{\,0.018}$ \\
& Weather & $\mathbf{0.089_{\,0.001}}$ & $0.112_{\,0.001}$ & $0.115_{\,0.001}$ & $0.096_{\,0.001}$ & $\second{0.090_{\,0.000}}$ & $0.091_{\,0.000}$ \\
\bottomrule
\end{tabular}
\end{table}

\begin{table}[t]
\caption{Per-model benchmark on the TimeMixer grid. Standard-scaled targets, all-horizon means; cell = mean with per-seed std subscript (3 seeds); lower better; bold/underline mark best/second-best per (metric, dataset). Dynamic probabilistic heads: DeepAR/GluonTS-style Gaussian and Student-$t$ \citep{salinas2020deepar,alexandrov2020gluonts}, TFT-style quantile \citep{lim2021tft} (no density, excluded from NLPD); deep-kernel GP heads: single-kernel DKL \citep{wilson2016deep}.}
\label{tab:results-timemixer}
\centering
\small
\setlength{\tabcolsep}{3pt}
\begin{tabular}{clcccccc}
\toprule
& & & \multicolumn{3}{c}{\textit{Dynamic probabilistic heads}} & \multicolumn{2}{c}{\textit{Deep-kernel GP heads}} \\
\cmidrule(lr){4-6}\cmidrule(lr){7-8}
& Dataset & \DeRegime{} & Gaussian & Student-$t$ & Quantile & RBF & RQ \\
\midrule
\multirow{10}{*}{\rotatebox[origin=c]{90}{\textbf{NLPD}}}
& ETTh1 & $\second{0.542_{\,0.006}}$ & $0.608_{\,0.006}$ & $\mathbf{0.536_{\,0.007}}$ & --- & $0.671_{\,0.003}$ & $0.670_{\,0.003}$ \\
& ETTh2 & $\mathbf{0.049_{\,0.006}}$ & $0.132_{\,0.017}$ & $\second{0.092_{\,0.010}}$ & --- & $0.406_{\,0.000}$ & $0.406_{\,0.001}$ \\
& ETTm1 & $\mathbf{0.221_{\,0.010}}$ & $0.445_{\,0.026}$ & $\second{0.238_{\,0.025}}$ & --- & $1.059_{\,0.000}$ & $1.061_{\,0.000}$ \\
& ETTm2 & $\mathbf{-0.301_{\,0.053}}$ & $-0.036_{\,0.014}$ & $\second{-0.106_{\,0.039}}$ & --- & $0.706_{\,0.323}$ & $0.529_{\,0.046}$ \\
& Exchange & $\mathbf{-0.528_{\,0.004}}$ & $-0.414_{\,0.071}$ & $\second{-0.512_{\,0.030}}$ & --- & $-0.347_{\,0.007}$ & $-0.355_{\,0.025}$ \\
& Electricity & $\second{-0.939_{\,0.169}}$ & $-0.835_{\,0.005}$ & $\mathbf{-0.957_{\,0.037}}$ & --- & $0.239_{\,0.001}$ & $0.246_{\,0.002}$ \\
& Traffic & $-0.587_{\,0.117}$ & $\second{-0.606_{\,0.010}}$ & $\mathbf{-0.731_{\,0.012}}$ & --- & $0.398_{\,0.006}$ & $0.400_{\,0.004}$ \\
& Nasdaq & $\second{0.952_{\,0.022}}$ & $0.997_{\,0.068}$ & $\mathbf{0.906_{\,0.024}}$ & --- & $1.091_{\,0.094}$ & $1.077_{\,0.048}$ \\
& Illness & $1.952_{\,0.061}$ & $3.246_{\,0.423}$ & $2.217_{\,0.265}$ & --- & $\mathbf{1.828_{\,0.056}}$ & $\second{1.848_{\,0.051}}$ \\
& Weather & $\mathbf{-0.616_{\,0.027}}$ & $-0.381_{\,0.047}$ & $\second{-0.407_{\,0.032}}$ & --- & $-0.250_{\,0.009}$ & $-0.249_{\,0.013}$ \\
\midrule
\multirow{10}{*}{\rotatebox[origin=c]{90}{\textbf{CRPS}}}
& ETTh1 & $\second{0.270_{\,0.001}}$ & $0.271_{\,0.001}$ & $\mathbf{0.270_{\,0.001}}$ & $0.301_{\,0.002}$ & $0.281_{\,0.001}$ & $0.281_{\,0.001}$ \\
& ETTh2 & $\mathbf{0.169_{\,0.000}}$ & $0.175_{\,0.002}$ & $\second{0.170_{\,0.000}}$ & $0.183_{\,0.005}$ & $0.242_{\,0.000}$ & $0.242_{\,0.000}$ \\
& ETTm1 & $\mathbf{0.212_{\,0.001}}$ & $0.229_{\,0.005}$ & $\second{0.214_{\,0.004}}$ & $0.222_{\,0.001}$ & $0.464_{\,0.001}$ & $0.465_{\,0.000}$ \\
& ETTm2 & $\mathbf{0.139_{\,0.000}}$ & $0.150_{\,0.006}$ & $\second{0.144_{\,0.001}}$ & $0.147_{\,0.001}$ & $0.326_{\,0.130}$ & $0.253_{\,0.011}$ \\
& Exchange & $\mathbf{0.085_{\,0.000}}$ & $0.088_{\,0.005}$ & $\second{0.086_{\,0.003}}$ & $0.094_{\,0.003}$ & $0.095_{\,0.002}$ & $0.093_{\,0.003}$ \\
& Electricity & $\second{0.061_{\,0.004}}$ & $\mathbf{0.060_{\,0.000}}$ & $0.062_{\,0.002}$ & $0.069_{\,0.002}$ & $0.132_{\,0.000}$ & $0.133_{\,0.000}$ \\
& Traffic & $0.099_{\,0.001}$ & $\mathbf{0.084_{\,0.001}}$ & $\second{0.085_{\,0.001}}$ & $0.093_{\,0.001}$ & $0.165_{\,0.002}$ & $0.165_{\,0.001}$ \\
& Nasdaq & $\second{0.357_{\,0.002}}$ & $0.362_{\,0.020}$ & $\mathbf{0.353_{\,0.006}}$ & $0.406_{\,0.021}$ & $0.423_{\,0.058}$ & $0.408_{\,0.036}$ \\
& Illness & $0.888_{\,0.014}$ & $\mathbf{0.883_{\,0.030}}$ & $\second{0.883_{\,0.014}}$ & $0.883_{\,0.027}$ & $0.907_{\,0.034}$ & $0.936_{\,0.016}$ \\
& Weather & $\mathbf{0.113_{\,0.001}}$ & $\second{0.124_{\,0.002}}$ & $0.124_{\,0.001}$ & $0.130_{\,0.003}$ & $0.126_{\,0.000}$ & $0.126_{\,0.000}$ \\
\midrule
\multirow{10}{*}{\rotatebox[origin=c]{90}{\textbf{MSE}}}
& ETTh1 & $0.324_{\,0.001}$ & $0.325_{\,0.002}$ & $0.324_{\,0.001}$ & $0.358_{\,0.005}$ & $\mathbf{0.321_{\,0.002}}$ & $\second{0.323_{\,0.002}}$ \\
& ETTh2 & $\second{0.118_{\,0.001}}$ & $0.120_{\,0.001}$ & $\mathbf{0.118_{\,0.000}}$ & $0.135_{\,0.005}$ & $0.232_{\,0.000}$ & $0.231_{\,0.000}$ \\
& ETTm1 & $\mathbf{0.223_{\,0.002}}$ & $0.234_{\,0.003}$ & $\second{0.227_{\,0.007}}$ & $0.228_{\,0.001}$ & $0.905_{\,0.002}$ & $0.908_{\,0.001}$ \\
& ETTm2 & $\mathbf{0.091_{\,0.001}}$ & $0.095_{\,0.003}$ & $\second{0.093_{\,0.001}}$ & $0.098_{\,0.001}$ & $0.226_{\,0.045}$ & $0.203_{\,0.005}$ \\
& Exchange & $\mathbf{0.027_{\,0.000}}$ & $0.028_{\,0.003}$ & $\second{0.027_{\,0.001}}$ & $0.031_{\,0.002}$ & $0.031_{\,0.002}$ & $0.029_{\,0.002}$ \\
& Electricity & $\mathbf{0.017_{\,0.000}}$ & $\second{0.017_{\,0.000}}$ & $0.018_{\,0.001}$ & $0.019_{\,0.001}$ & $0.022_{\,0.000}$ & $0.022_{\,0.000}$ \\
& Traffic & $0.060_{\,0.007}$ & $\mathbf{0.042_{\,0.001}}$ & $\second{0.044_{\,0.002}}$ & $0.044_{\,0.003}$ & $0.068_{\,0.004}$ & $0.067_{\,0.002}$ \\
& Nasdaq & $\second{0.439_{\,0.006}}$ & $0.459_{\,0.043}$ & $\mathbf{0.428_{\,0.013}}$ & $0.509_{\,0.024}$ & $0.649_{\,0.214}$ & $0.578_{\,0.143}$ \\
& Illness & $\mathbf{2.949_{\,0.007}}$ & $3.209_{\,0.154}$ & $\second{2.996_{\,0.064}}$ & $3.137_{\,0.093}$ & $3.319_{\,0.163}$ & $3.549_{\,0.059}$ \\
& Weather & $0.088_{\,0.001}$ & $0.097_{\,0.003}$ & $0.096_{\,0.001}$ & $0.096_{\,0.003}$ & $\second{0.087_{\,0.000}}$ & $\mathbf{0.087_{\,0.001}}$ \\
\bottomrule
\end{tabular}
\end{table}

\subsection{Dataset construction}
\label{app:dataset-construction}

The core benchmark datasets are the standard long-horizon forecasting series used in the Informer/Autoformer/\PatchTST{} line of work \citep{zhou2021informer,wu2021autoformer,nie2023patchtst}: ETTh1, ETTh2, ETTm1, ETTm2, Electricity, Traffic, Exchange, Illness, and Weather. Splits are chronological, with the final 20\% held out for test. Validation uses the immediately preceding block, with a lookback overlap only to initialise the input window. The ETT datasets use a 20\% validation block; Electricity, Traffic, Exchange, Nasdaq, Illness, and Weather use a 10\% validation block. The default lookback is 336 and forecast horizon 24; Illness uses a shorter lookback of 104 because of its much smaller sample size. Horizon 24 follows the short-to-medium-horizon convention used in the Informer/Autoformer line of work \citep{zhou2021informer,wu2021autoformer} and isolates regime-driven residual uncertainty before long-horizon mean-error compounding washes out residual structure.

Several dataset choices are deliberate. The Nasdaq experiment uses a longer daily close series than the common short Nasdaq benchmark used in TFB \citep{qiu2024tfb}: 8,816 trading days from 1990-01-02 to 2024-12-30. With the chronological split used in Table~\ref{tab:results-patchtst}, training covers 1990-01-02--2014-06-25, validation covers 2014-06-26--2017-12-22, and test covers 2017-12-26--2024-12-30. This test period is intentionally difficult: it includes the COVID-19 crash and rebound, the inflation/rate-hiking period, and substantial changes in market volatility.

For Electricity and Traffic we forecast the aggregate sum over channels, yielding one-dimensional series named \texttt{total}. This aggregation makes demand and congestion regimes more visible at the macro level, reduces compute for the initial regime-discovery study, and provides clean univariate tests alongside the multivariate financial, weather, and ETT settings. The multivariate Exchange experiment uses seven continuous floating-rate channels, AUD, CAD, CHF, GBP, JPY, NZD, and SGD. We exclude CNY because the historical exchange-rate process is not well described as a continuously floating channel over the full sample; a hybrid treatment for pegged, managed, or discretely adjusted series is a useful future extension.

For Weather we use a continuous-channel subset, removing \texttt{rain\_mm}, \texttt{raining\_s}, \texttt{PAR\_umol\_per\_m2\_s}, \texttt{max\_PAR\_umol\_per\_m2\_s}, and \texttt{SWDR\_W\_per\_m2}. These channels are event-, daylight-, or zero-inflated measurements and are less naturally handled by the continuous Gaussian/Student-$t$ residual families used here. As with CNY, modelling them directly would likely benefit from a hybrid observation model or a channel-specific likelihood family.

\subsection{Ablation variants}
\label{app:ablation-details}

The variants in Table~\ref{tab:ablations-patchtst} are organised into architectural ablations (each removes or replaces one component of \DeRegime{}) and methodological controls (each tests a methodologically distinct head). All share the \PatchTST{} encoder and seed grid \{42, 123, 456\}.

\paragraph{Architectural ablations.}
\begin{itemize}\itemsep1pt
\item \emph{Softmax gate}: replaces the deterministic stick-breaking gate with a softmax over $R_{\max}$ logits. Tests the effect of the stick-breaking truncation curriculum.
\item \emph{No deep mean}: removes the deterministic mean / residual split, so the entire conditional mean is carried by the GP residual. Tests the contribution of the explicit $\mu_\theta$ pathway.
\item \emph{No residual variance}: removes the residual observation-variance head $v_{\mathrm{res}}(\xi)$ from Eq.~\eqref{eq:scale-main}, leaving the regime scale $c_d\tau_r$ as the only variance term. Tests whether the input-/horizon-dependent variance correction matters.
\item \emph{Shared likelihood}: shares the regime scale and tail parameters globally per channel rather than per regime. Tests whether per-regime $(\sigma_r,\nu_r)$ is doing real work.
\item \emph{Single kernel}: collapses the regime mixture to a single RBF GP residual ($R{=}1$) while retaining the shared mean path and Student-$t$ likelihood, isolating the contribution of multi-regime structure.
\end{itemize}

\paragraph{Methodological controls.}
\begin{itemize}\itemsep1pt
\item \emph{\DeRegime{} ($\mathcal{N}$)}: heteroskedastic Gaussian likelihood instead of the Student-$t$ mixture; same kernel and gate. Tests the contribution of heavy-tailed mixture components.
\item \emph{\DeRegime{} ($\mathcal{N}$ mix)}: direct Gaussian mixture likelihood (gate-weighted Gaussians); same kernel. Tests Gaussian vs Student-$t$ mixture tails directly.
\item \emph{MDN ($\mathcal{N}$)} and \emph{MDN ($t$)}: same-backbone Gaussian and Student-$t$ mixture-density heads with no GP residual. Tests whether the kernel-level regime geometry adds anything beyond a generic same-backbone mixture-density head.
\end{itemize}

\subsection{Baseline matrix}
\label{app:baseline-matrix}

The baseline suite is chosen to isolate modelling factors rather than to be an undifferentiated leaderboard. Gaussian and Student-$t$ share the encoder and ask whether a single parametric density is already sufficient. Quantile regression asks whether distribution-free interval estimation is enough when no likelihood is required. DKL-RBF and DKL-RQ keep the deep-kernel GP but remove regime mixing, testing whether a single GP latent is enough; their base kernels are the squared-exponential of Eq.~\eqref{eq:rbf-base-main} and the rational-quadratic $K_{\mathrm{RQ}}(z,z')=a^2\bigl(1+\|z-z'\|_2^2/(2\alpha\ell^2)\bigr)^{-\alpha}$ with shape parameter $\alpha>0$ (recovering the RBF as $\alpha\to\infty$). MDN and MDN-$t$ heads, which retain \DeRegime{}'s mixture-density component but drop the kernel-coupled GP residual, and \DeRegime{}-$\mathcal{N}$, which removes the Student-$t$ mixture tails, are reported as ablations (Table~\ref{tab:ablations-patchtst}). Classical generalised autoregressive conditional heteroskedasticity (GARCH) and Markov-switching autoregressive configurations are generated as auxiliary baselines; the main paired table focuses on the matched neural and deep-kernel density families. These comparisons separate the effects of heavy tails, finite mixtures, GP residuals, and explicit regime structure.

\subsection{Compute resources}
\label{app:compute}
All models were implemented in PyTorch and trained one seed per GPU on a mixed pool of NVIDIA A100 80GB and B200-class GPUs. To make the resource requirements interpretable, Table~\ref{tab:compute} reports both A100 and measured B200 diagnostics. \DeRegime{} and the DKL baselines use an effective batch size of 512, while non-GP Gaussian, Student-$t$, MDN, and quantile heads use 128. We use gradient accumulation (GA) to avoid memory failures when fitting the large effective batches used by the deep-kernel models: GA only changes the micro-batch size used to realise the same effective batch; it does not change the optimiser batch seen by the model.

For the seven-channel datasets (ETT, Exchange, Illness), the A100 estimate doubles GA relative to the B200 runs, halving the micro-batch size to fit an 80 GB memory budget; we therefore estimate peak allocated memory as half the logged B200 peak and wall-clock time as twice the logged B200 time. The univariate aggregate datasets already fit comfortably with GA${=}1$, so we use the measured ranges for both hardware columns. Weather is substantially more memory-intensive, so we report only the measured B200 diagnostics and mark the A100 columns as not available.

\begin{table}[h]
\centering
\small
\begin{tabular}{lccccc}
\toprule
Dataset group & Channels & A100 mem. & A100 time & B200 mem. & B200 time \\
\midrule
Nasdaq, Traffic-sum, Electricity-sum & 1 & 9--42 GB & 0.02--2.2 h & 9--42 GB & 0.02--2.2 h \\
Exchange & 7 & 15--73 GB & 0.4--1.9 h & 29--145 GB & 0.2--0.9 h \\
ETTh1, ETTh2 & 7 & 15--73 GB & 0.3--3.9 h & 29--145 GB & 0.1--1.9 h \\
ETTm1, ETTm2 & 7 & 15--73 GB & 0.4--15 h & 29--145 GB & 0.2--7.5 h \\
Illness & 7 (small) & 13--71 GB & 0.04--1.0 h & 27--142 GB & 0.02--0.5 h \\
Weather continuous subset & 16 & N/A & N/A & 66--166 GB & 0.1--7.1 h \\
\bottomrule
\end{tabular}
\caption{Indicative per-seed compute for kernel models, including \DeRegime{} and the DKL baselines. Non-GP heads are substantially cheaper, typically less than $1.7$ GPU h per seed.}
\label{tab:compute}
\end{table}

Summing over the reported model families and seeds gives an approximate final-benchmark budget of a few hundred GPU hours. The broader research project used substantially more compute because it included architecture, likelihood, inducing-point, batch-size, and regularisation ablations that are not reported as main results.

The estimates above include the conservative early-stopping patience of 50 validation checks used for the reported experiments. In several \DeRegime{} histories, the selected validation checkpoint occurred well before termination, so a tighter patience setting would reduce wall-clock substantially without changing the reported model. We nevertheless report the conservative cost required to reproduce the submitted training protocol.

The finite-truncation formulation also suggests a route to faster inference than the reported full-truncation implementation. The stick-breaking gate typically assigns negligible mass to many candidate regimes; a deployment system could prune regimes whose average or location-specific gate mass falls below a tolerance, or evaluate only the top-$k$ active regimes. This would reduce per-regime feature evaluations, kernel-mixture terms, and Student-$t$ mixture components. We do not use or benchmark this pruning optimisation in the reported scores.

\subsection{Modern generative forecasters not included}
\label{app:generative-not-included}

Diffusion, score-based, and normalising-flow forecasters are powerful neighbouring families. Examples include TimeGrad \citep{rasul2021autoregressive}, ScoreGrad \citep{yan2021scoregrad}, TSDiff \citep{kollovieh2023predict}, and conditioned normalising-flow forecasters \citep{rasul2020multivariate}. We do not claim that \DeRegime{} dominates these broader generative models. The comparison in this paper is deliberately narrower: direct multi-horizon methods that share the same encoder/preprocessing pipeline and either provide analytic predictive densities or standard direct distributional summaries. Many diffusion and score-based forecasters are evaluated through sample-based CRPS or CRPS-sum with a chosen sample budget and denoising schedule, while closed-form marginal NLPD is generally unavailable. Flow models can provide likelihoods, but they use a different joint-density parameterisation and raise a separate question from the kernel-space regime mechanism studied here. A full generative-forecasting comparison would require fixing rollout strategy, joint-versus-marginal scoring, sample budget, and inference-time compute, and is left as future work.

\subsection{Regime diagnostics}
\label{app:diagnostics}

The interpretability claim is supported visually in the main paper by gate trajectories $\pi_r(x,t)$ across the forecast horizon (Figures~\ref{fig:regime-diagnostics-main} and~\ref{fig:app-regime-remaining-six}), with the thresholded effective count $R_{\mathrm{eff}}=\sum_r\mathbf{1}\{\bar\pi_r>10^{-2}\}$ annotated on each panel and $\bar\pi_r=\E_\xi[\pi_r(\xi)]$ implicit in the colour ordering. The released codebase additionally produces per-regime parameter summaries of $(\tau_r,\nu_r)$ that identify low-noise/high-noise and light-tailed/heavy-tailed regimes, gate-entropy diagnostics, regime-conditional calibration plots that compute interval coverage after conditioning on the dominant regime, and residual-variance maps for $v_{\mathrm{res}}(x,t)$ showing where the finite regime mixture requires an additional variance floor. A same-backbone MDN can match a marginal predictive density without forming a stable allocation of inputs to reusable states, so these diagnostics together constitute the empirical evidence for the regime interpretation.

Figure~\ref{fig:app-regime-remaining-six} extends Figure~\ref{fig:regime-diagnostics-main} to the six datasets not shown in the main-text diagnostic panel. The examples were selected to cover different uncertainty phenomena rather than to maximise visual smoothness. ETTh1 shows dense overlapping residual states on a canonical benchmark, with the long horizon activating almost all $R_{\max}=16$ candidate regimes rather than forcing a sparse changepoint story. ETTh2 shows the same horizon-indexed mechanism on a cleaner thermal oscillation: the one-step gate cycles with the waveform, while the 24-step gate spreads mass across broader phase and amplitude states. ETTm2 illustrates a sharper timescale split, with a nearly single short-horizon regime but many long-horizon regimes once phase uncertainty accumulates. Aggregate Electricity is the sparse counterexample: a strong daily load cycle is explained using only a few active regimes, showing that the finite gate does not automatically invent a complicated state narrative. Nasdaq captures a macro-financial shock, where the short-horizon gate is comparatively stable through the COVID drawdown and recovery but the long-horizon gate reallocates mass around the crash and rebound. Weather gives a multichannel meteorological example; one-step temperature is nearly deterministic over the selected window, but the 24-step forecast uses many rapidly alternating regimes, consistent with front-driven uncertainty that appears only once the forecast horizon lengthens.

\begin{figure}[p]
\centering
\includegraphics[width=0.98\linewidth]{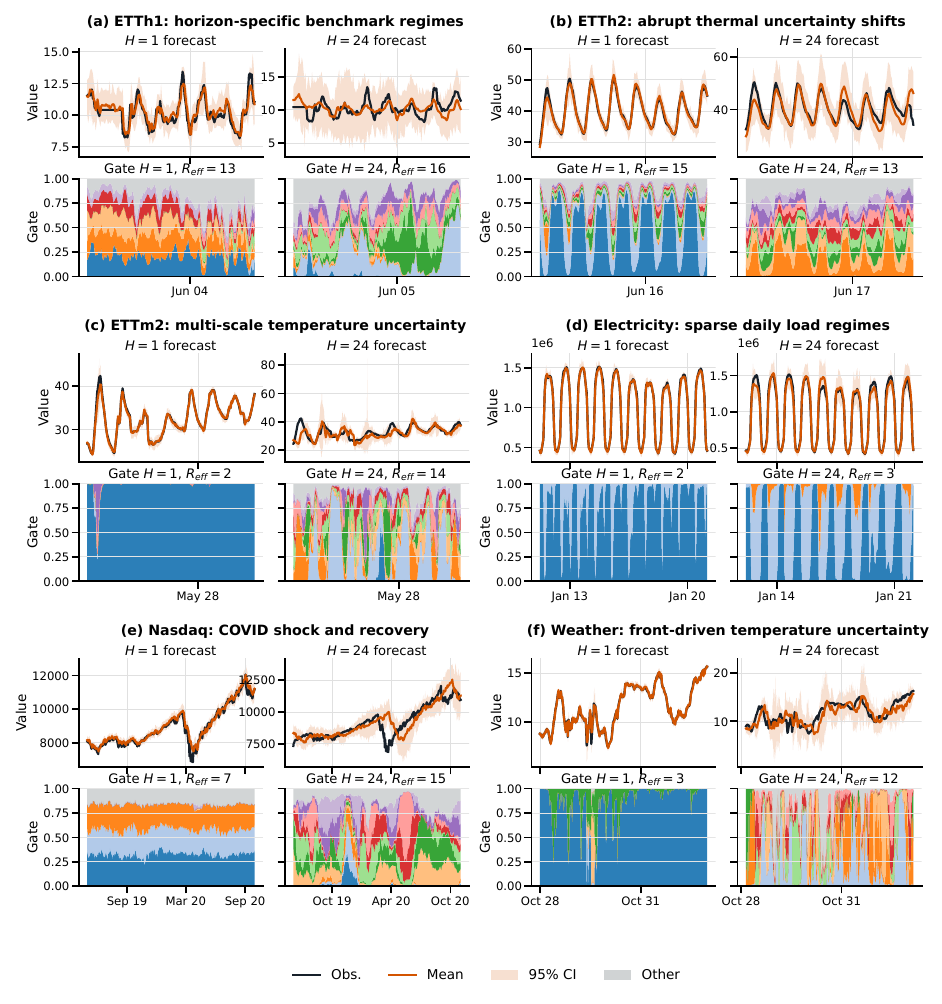}
\caption{Additional representative \DeRegime{} diagnostics for the six benchmark datasets not shown in Figure~\ref{fig:regime-diagnostics-main}. The panels show ETTh1, ETTh2, ETTm2, aggregate Electricity, Nasdaq, and Weather. Across datasets, the same diagnostic reveals sparse recurring states, dense overlapping residual states, and horizon-dependent growth in the number of active regimes. Gate colours are local display ranks by average mass within each panel; grey groups the remaining low-mass regimes as ``Other''.}
\label{fig:app-regime-remaining-six}
\end{figure}

\end{document}